\newdimen\arrowsize
\newtheorem{mythm}{Theorem}
\newtheorem{mylm}{Lemma}
\newtheorem{myasum}{Assumption}
\newtheorem{mycor}{Corollary}
\newtheorem{mylemma}{Lemma}
\title{Learning with Biased Complementary Labels}
\font\myfont=cmr12 at 13pt
\author{\myfont Xiyu~Yu\thanks{UBTECH Sydney AI Centre and the School of Information Technologies in the Faculty Engineering and Information Technologies at The University of Sydney, NSW, 2006, Australia, xiyu0300@uni.sydney.edu.au, tongliang.liu@sydney.edu.au, dacheng.tao@sydney.edu.au} \ \ \   Tongliang~Liu\footnotemark[1] \ \ \   
Mingming Gong\thanks{Department of Biomedical Informatics, University of Pittsburgh,gongmingnju@gmail.com}~\thanks{Department of Philosophy, Carnegie Mellon University} \ \ \  Dacheng Tao\footnotemark[1]}
\date{}
\begin{document}
\bibliographystyle{plain}

\maketitle

\begin{abstract}
 In this paper, we study the classification problem in which we have access to easily obtainable surrogate for true labels, namely complementary labels, which specify classes that observations do \textbf{not} belong to. Let $Y$ and $\bar{Y}$ be the true and complementary labels, respectively. We first model the annotation of complementary labels via transition probabilities $P(\bar{Y}=i|Y=j), i\neq j\in\{1,\cdots,c\}$, where $c$ is the number of classes. Previous methods implicitly assume that $P(\bar{Y}=i|Y=j), \forall i\neq j$, are identical, which is not true in practice because humans are biased toward their own experience. For example, as shown in Figure \ref{complementary_label_cases}, if an annotator is more familiar with monkeys than prairie dogs when providing complementary labels for meerkats, she is more likely to employ ``monkey'' as a complementary label. We therefore reason that the transition probabilities will be different. In this paper, we propose a framework that contributes three main innovations to learning with \textbf{biased} complementary labels: (1) It estimates transition probabilities with no bias. (2) It provides a general method to modify traditional loss functions and extends standard deep neural network classifiers to learn with biased complementary labels. (3)  It theoretically ensures that the classifier learned with complementary labels converges to the optimal one learned with true labels. Comprehensive experiments on several benchmark datasets validate the superiority of our method to current state-of-the-art methods.
\end{abstract}

\newpage
\section{Introduction}
Large-scale training datasets translate supervised learning from theories and algorithms to practice, especially in deep supervised learning. One major assumption that guarantees this successful translation is that data are accurately labeled. However, collecting true labels for large-scale datasets is often expensive, time-consuming, and sometimes impossible. For this reason, some weak but cheap supervision information has been exploited to boost learning performance. Such supervision includes side information \cite{xing2003distance}, privileged information \cite{vapnik2009new}, and weakly supervised information \cite{Law_2017_CVPR} based on semi-supervised data \cite{zhu2005semi,haeusser2017learning,Abbasnejad_2017_CVPR}, positive and unlabeled data \cite{du2014analysis}, or noisy labeled data \cite{misra2016seeing,Veit_2017_CVPR,han2018progressive,han2018co,cheng2017learning,gong2017learning}.  In this paper, we study another weak supervision: the complementary label which specifies a class that an object does \textbf{not} belong to. Complementary labels are sometimes easily obtainable, especially when the class set is relatively large. Given an observation in multi-class classification, identifying a class label that is incorrect for the observation is often much easier than identifying the true label.

\begin{figure}[t]
\begin{center}
{\includegraphics[width=0.85\columnwidth]{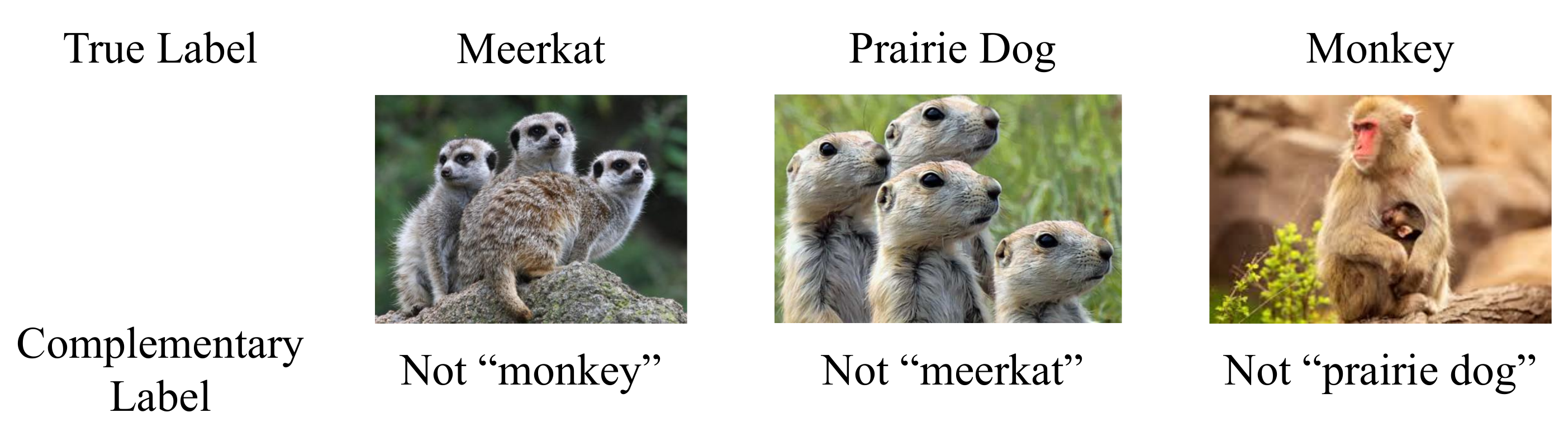}} 
\caption{A comparison between true labels (top) and complementary labels (bottom). } \label{complementary_label_cases}
\end{center}
\end{figure}

Complementary labels carry useful information and are widely used in our daily lives: for example, to identify a language we do not know, we may say ``not English''; to categorize a new movie without any fighting, we may say ``not action''; and to recognize an image of a previous American president, we may say ``not Trump''. Ishida et al. \cite{ishida2017learning} then proposed learning from examples with only complementary labels by assuming that a complementary label is uniformly selected from the $c-1$ classes other than the true label class ($c>2$). Specifically, they designed an unbiased estimator such that learning with complementary labels was asymptotically consistent with learning with true labels. 

Sometimes, annotators provide complementary labels based on both the content of observations and their own experience, leading to the biases in complementary labels. \begin{wrapfigure}{r}{0.31\columnwidth}
{\includegraphics[width=0.3\columnwidth]{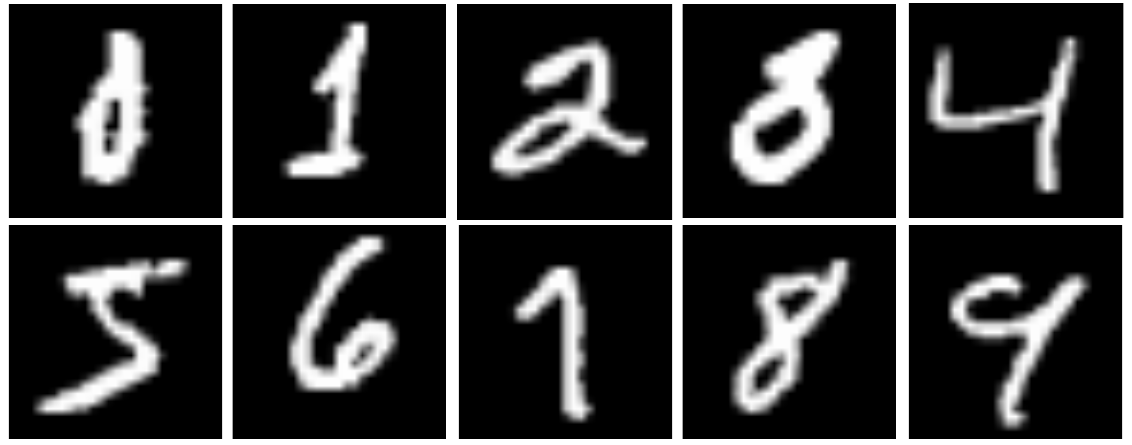}}
\end{wrapfigure} Thus, complementary labels are mostly non-uniformly selected from the remaining $c-1$ classes, some of which even have no chance of being selected for certain cases. Regarding the bias governed by the observation content, let us take labeling digits 0-9 as an example. Since digit 1 is much more dissimilar to digit 3 than digit 8, the complementary labels of ``3'' are more likely to be assigned with ``1'' rather than ``8''. Regarding the bias governed by annotators' experience, taking our example above, we can see that if one is more familiar with monkeys than other animals, she may be more likely to use ``monkey'' as a complementary label.

Motivated by the cause of biases, we here model the biased procedure of annotating complementary labels via probabilities $P(\bar{Y}=i|Y=j), i\neq j\in\{1,\cdots,c\}$. Note that the assumption that a complementary label is uniformly selected from the remaining $c-1$ classes implies $P(\bar{Y}=i|Y=j)=1/(c-1), i\neq j\in\{1,\cdots,c\}$. However, in real applications, the probabilities should not be $1/(c-1)$ and can differ vastly. How to estimate the probabilities is a key problem for learning with complementary labels.

We therefore address the problem of learning with biased complementary labels. For effective learning, we propose to estimate the probabilities $P(\bar{Y}=i|Y=j), i\neq j\in\{1,\cdots,c\}$ without biases. Specifically, we prove that given a clear observation $\mathbf{x}_j$ for the $j$-th class, i.e., the observation satisfying $P(Y=j|\mathbf{x}_j)=1$, which can be easily identified by the annotator, it holds that $P(\bar{Y}=i|Y=j)=P(\bar{Y}=i|\mathbf{x}_j), i\in\{1,\cdots,j-1,j,\cdots,c\}$. This implies that probabilities $P(\bar{Y}=i|Y=j), i\neq j\in\{1,\cdots,c\}$ can be estimated without biases by learning $P(\bar{Y}=i|\mathbf{x}_j)$ from the examples with complementary labels. To obtain these clear observations, we assume that a small set of easily distinguishable instances (e.g., 10 instances per class) is usually not expensive to obtain.

Given the probabilities $P(\bar{Y}|Y)$, we modify traditional loss functions proposed for learning with true labels so that the modifications can be employed to efficiently learn with biased complementary labels. We also prove that by exploiting examples with complementary labels, the learned classifier converges to the optimal one learned with true labels with a guaranteed rate. Moreover, we also empirically show that the convergence of our method benefits more from the biased setting than from the uniform assumption, meaning that we can use a small training sample to achieve a high performance.

Comprehensive experiments are conducted on benchmark datasets including UCI, MNIST, CIFAR, and Tiny ImageNet, which verifies that our method significantly outperforms the state-of-the-art methods with accuracy gains of over 10\%. We also compare the performance of classifiers learned with complementary labels to those learned with true labels. The results show that our method almost attains the performance of learning with true labels in some situations.

\section{Related Work}
\paragraph{Learning with complementary labels.} To the best of our knowledge, Ishida et al. \cite{ishida2017learning} is the first to study learning with complementary labels. They assumed that the transition probabilities are identical and then proposed modifying traditional one-versus-all (OVA) and pairwise-comparison (PC) losses for learning with complementary labels. The main differences between our method and \cite{ishida2017learning} are: (1) Our work is motivated by the fact that annotating complementary labels are often affected by human biases. Thus, we study a different setting in which transition probabilities are different. (2) In \cite{ishida2017learning}, modifying OVA and PC losses is naturally suitable for the uniform setting and provides an unbiased estimator for the expected risk of classification with true labels. In this paper, our method can be generalized to many losses such as cross-entropy loss and directly provides an unbiased estimator for the risk minimizer. Due to these differences, \cite{ishida2017learning} often achieves promising performance in the uniform setting while our method achieves good performance in both the uniform and non-uniform setting.

\paragraph{Learning with noisy labels.} In the setting of label noise, transition probabilities are introduced to statistically model the generation of noisy labels. In classification and transfer learning, methods \cite{natarajan2013learning,liu2016classification,wang2018multiclass,yu2017transfer} employ transition probabilities to modify loss functions such that they can be robust to noisy labels. Similar strategies to modify deep neural networks by adding a transition layer have been proposed in \cite{sukhbaatar2014training,patrini2016making}. However, this is the first time that this idea is applied to the new problem of learning with biased complementary labels. Different from label noise, here, all diagonal entries of the transition matrix are zeros and the transition matrix sometimes may be not required to be invertible in empirical.

\section{Problem Setup}
In multi-class classification, let $\mathcal{X} \in \mathbb{R}^d$ be the feature space and $\mathcal{Y}=[c]$ be the label space, where $d$ is the feature space dimension; $[c] = \{1,\cdots,c\}$; and $c>2$ is the number of classes. We assume that variables $(X,Y,\bar{Y})$ are defined on the space $\mathcal{X} \times \mathcal{Y} \times \mathcal{Y}$ with a joint probability measure $P(X,Y,\bar{Y})$ ($P_{XY\bar{Y}}$ for short). 

In practice, true labels are sometimes expensive but complementary labels are cheap. This work thus studies the setting in which we have a large set of training examples with biased complementary labels and a very small set of correctly labeled examples. The latter is only used for estimating transition probabilities. Our aim is to learn the optimal classifier with respect to the examples with true labels by exploiting the examples with complementary labels. 

For each example $(\mathbf{x},y)\in \mathcal{X} \times \mathcal{Y}$, a complementary label $\bar{y}$ is selected from the complement set $\mathcal{Y} \setminus \{y\}$. We assign a probability for each $\bar{y}\in \mathcal{Y} \setminus \{y\}$ to indicate how likely it can be selected, i.e., $P(\bar{Y}=\bar{y}|X=\mathbf{x},Y=y)$. In this paper, we assume that $\bar{Y}$ is independent of feature $X$ conditioned on true label $Y$, i.e., $P(\bar{Y}=\bar{y}|X=\mathbf{x},Y=y)=P(\bar{Y}=\bar{y}|Y=y)$. This assumption considers the bias which depends only on the classes, e.g., if the annotator is not familiar with the features in a specific class, she is likely to assign complementary labels that she is more familiar with. We summarize all the probabilities into a transition matrix $\mathbf{Q} \in \mathbb{R}^{c\times c}$, where $Q_{ij}=P(\bar{Y}=j|Y=i)$ and $Q_{ii}=0, \forall i,j\in[c]$. Here, $Q_{ij}$ denotes the entry value in the $i$-th row and $j$-th column of $\mathbf{Q}$. Note that transition matrix is also widely exploited in Markov chains \cite{gagniuc2017markov} and has many applications in machine learning, such as learning with label noise \cite{natarajan2013learning,sukhbaatar2014training,patrini2016making}. 

If complementary labels are uniformly selected from the complement set, then $\forall i,j\in [c] \textrm{ and } i\neq j$, $Q_{ij}=\frac{1}{c-1}$. Previous work \cite{ishida2017learning} has proven that the optimal classifier can be found under the uniform assumption. Sometimes, this is not true in practice due to human biases. Therefore, we focus on situations in which $Q_{ij}, \forall i \neq j$, are different. We mainly study the following problems: how to modify loss functions such that the classifier learned with these biased complementary labels can converge to the optimal one learned with true labels; the speed of the convergence; and how to estimate transition probabilities.

\section{Methodology}
In this section, we study how to learn with biased complementary labels. We first review how to learn optimal classifiers from examples with true labels. Then, we modify loss functions for complementary labels and propose a deep learning based model accordingly. Lastly, we theoretically prove that the classifier learned by our method is consistent with the optimal classifier learned with true labels.

\subsection{Learning with True Labels}  
The aim of multi-class classification is to learn a classifier $f(\mathbf{x})$ that predicts a label $y$ for a given observation $\mathbf{x}$. Typically, the classifier is of the following form:
\begin{equation} \label{classifier}
f(X)=\arg\max_{i\in[c]} g_i(X),
\end{equation}
where $\mathbf{g}:\mathcal{X} \to \mathbb{R}^c$ and $g_i(X)$ is the estimate of $P(Y=i|X)$. 

Various loss functions $\ell(f(X),Y)$ have been proposed to measure the risk of predicting $f(X)$ for $Y$ \cite{bartlett2006convexity}. Formally, the expected risk is defined as.
\begin{equation} \label{expected_risk_true}
R(f) = \mathbb{E}_{(X,Y)\sim P_{XY}}[\ell(f(X),Y)].
\end{equation}
The optimal classifier is the one that minimizes the expected risk; that is,
\begin{equation} \label{optimal}
f^* = \arg\min_{f \in \mathcal{F}} R(f),
\end{equation}
where $\mathcal{F}$ is the space of $f$.

However, the distribution $P_{XY}$ is usually unknown. We then approximate $R(f)$ by using its empirical counterpart: 
$R_n(f)=\frac{1}{n}\sum_{i=1}^n \ell(f(\mathbf{x}_i),y_i)$, where $\{(\mathbf{x}_i,y_i)\}_{1\leq i\leq n}$ are i.i.d. examples drawn according to $P_{XY}$.

Similarly, the optimal classifier is approximated by $f_n = \arg\min_{f \in \mathcal{F}}R_n(f)$.

\subsection{Learning with Complementary Labels} 
True labels, especially for large-scale datasets, are often laborious and expensive to obtain. We thus study an easily obtainable surrogate; that is, complementary labels. However, if we still use traditional loss functions $\ell$ when learning with these complementary labels, similar to Eq.(\ref{classifier}), we can only learn a mapping $\mathbf{q}:\mathcal X \to \mathbf{R}^c$ that tries to predict conditional probabilities $P(\bar{Y}|X)$ and the corresponding classifier that predicts a $\bar{y}$ for a given observation $\mathbf{x}$.

Therefore, we need to modify these loss functions such that the classifier learned with biased complementary labels can converge to the optimal one learned with true labels. Specifically, let $\bar{\ell}$ be the modified loss function. Then, the expected and empirical risks with respect to complementary labels are defined as $\bar{R}(f) = \mathbb{E}_{(X,\bar{Y})\sim P_{X\bar{Y}}}[\bar{\ell}(f(X),\bar{Y})]$ and $\bar{R}_n(f) = \frac{1}{n}\sum_{i=1}^n \bar{\ell}(f(\mathbf{x}_i),\bar{y}_i)]$,
respectively. Here, $\{(\mathbf{x}_i,\bar{y}_i)\}_{1\leq i\leq n}$ are examples with complementary labels.

Denote $\bar{f}^*$ and $\bar{f}_n$ as the optimal solution obtained by minimizing $\bar{R}(f)$ and $\bar{R}_n(f)$, respectively. They are $\bar{f}^* = \arg\min_{f\in \mathcal{F}} \bar{R}(f)$ and $\bar{f}_n = \arg\min_{f \in \mathcal{F}} \bar{R}_n(f)$.

We hope that the modified loss function $\bar{\ell}$ can ensure that $\bar{f}_n \stackrel{n}{\longrightarrow} f^*$, which implies that by learning with complementary labels, the classifier we obtain can also approach to the optimal one defined in (\ref{optimal}).

Recall that in transition matrix $\mathbf{Q}$, $Q_{ij} = P(\bar{Y}=j|Y=i)$ and $Q_{ii} = P(\bar{Y}=i|Y=i)=0, \forall i \in [c]$. We observe that $P(Y|X)$ can be transferred to $P(\bar{Y}|X)$ by using the transition matrix $\mathbf{Q}$; that is, $\forall j\in [c]$,
\begin{equation}\label{flip_process}
\begin{aligned}
P(\bar{Y}=j|X) &= \sum_{i\neq j} P(\bar{Y}=j,Y=i|X) \\
&= \sum_{i\neq j} P(\bar{Y}=j|Y=i,X) P(Y=i|X)\\
&= \sum_{i\neq j} P(\bar{Y}=j|Y=i) P(Y=i|X).
\end{aligned}
\end{equation}

\begin{figure}[t]
\begin{center}
{\includegraphics[width=0.85\columnwidth]{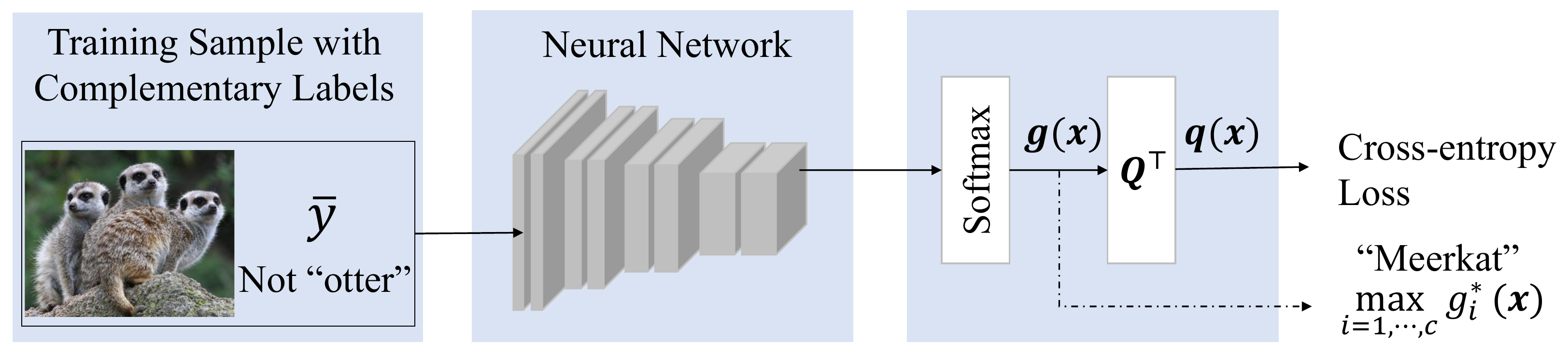}}
\caption{An overview of our method. We modify the deep neural network by adding a layer that multiplies the output of the softmax function by $\mathbf{Q}^\top$.} \label{fig_overview}
\end{center}
\end{figure}

Intuitively, if $q_i(X)$ tries to predict the probability $P(\bar{Y}=i|X)$, $\forall i \in [c]$, then $\mathbf{Q}^{-\top}\mathbf{q}$ can predict the probability $P(Y|X)$. To enable end-to-end learning rather than transferring after training, we let
\begin{equation}
\mathbf{q}(X) = \mathbf{Q}^{\top} \mathbf{g}(X),
\end{equation}
where $\mathbf{g}(X)$ is now an intermediate output, and $f(X)=\arg\max_{i\in [c]}g_i(X)$.

Then, the modified loss function $\bar{\ell}$ is
\begin{equation} \label{modified_loss0}
\bar{\ell}(f(X), \bar{Y}) = \ell(\mathbf{q}(X), \bar{Y}).
\end{equation}
In this way, if we can learn an optimal $\mathbf{q}^*$ such that $q_i^*(X)=P(\bar{Y}=i|X), \forall i \in [c]$, meanwhile, we can also find the optimal $\mathbf{g}^*$ and the classifier $f^*$.

This loss modification method can be easily applied to deep learning. As shown in Figure \ref{fig_overview}, we achieve this simply by adding a linear layer to the deep neural network. This layer outputs $\mathbf{q}(X)$ by multiplying the output of the softmax function (i.e., $\mathbf{g}(X)$) by the transposed transition matrix $\mathbf{Q}^\top$. With sufficient training examples with complementary labels, this deep neural network often simultaneously learns good classifiers for both $(X,\bar{Y})$ and $(X,Y)$.

Note that, in our modification, the forward process does not need to compute $\mathbf{Q}^{-\top}$. Even though the subsequent analysis for identification requires the transition matrix to be invertible, sometimes, we may have no such requirement in practice. We also show an example in our experiments that even with singular transition matrices, high classification performance can also be achieved if no column of $\mathbf{Q}$ is all-zero.

\section{Identification of the Optimal Classifier}
In this section, we aim to prove that the proposed loss modification method ensures the identifiability of the optimal classifier under a reasonable assumption:
\begin{myasum} \label{learnability}
By minimizing the expected risk $R(f)$, the optimal mapping $\mathbf{g}^*$ satisfies $g_i^*(X) = P(Y=i|X), \forall i \in [c]$.
\end{myasum}

Based on Assumption \ref{learnability}, we can prove that $\bar{f}^*=f^*$ by the following theorem:
\begin{mythm}\label{thm_main}
Suppose that $\mathbf{Q}$ is invertible and Assumption \ref{learnability} is satisfied, then the minimizer $\bar{f}^*$ of $\bar{R}(f)$ is also the minimizer $f^*$ of $R(f)$; that is, $\bar{f}^*=f^*$.
\end{mythm}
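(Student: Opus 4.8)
The plan is to reduce learning with complementary labels to an ordinary posterior-estimation problem for the variable $\bar{Y}$, and then to invert the transition matrix to recover the posterior for $Y$. The central observation is that, by the loss modification in Eq.(\ref{modified_loss0}), the complementary risk depends on the classifier only through the intermediate mapping $\mathbf{q}$:
\[
\bar{R}(f) = \mathbb{E}_{(X,\bar{Y})\sim P_{X\bar{Y}}}[\ell(\mathbf{q}(X),\bar{Y})],
\]
which is exactly the ordinary expected risk for predicting the categorical label $\bar{Y}$ from $X$ under the original loss $\ell$.

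First, I would argue that Assumption \ref{learnability} applies verbatim to this reformulated problem, with $Y$ replaced by $\bar{Y}$: since $\bar{Y}$ is itself a $[c]$-valued label equipped with a well-defined conditional distribution $P(\bar{Y}|X)$, the calibration property of $\ell$ guarantees that the minimizer $\mathbf{q}^*$ of $\bar{R}$ satisfies $q_i^*(X)=P(\bar{Y}=i|X)$ for all $i\in[c]$.

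Next comes the linear-algebra step. Because $\mathbf{Q}$ is invertible, the relation $\mathbf{q}(X)=\mathbf{Q}^{\top}\mathbf{g}(X)$ is a bijection between the mappings $\mathbf{g}$ and $\mathbf{q}$; hence minimizing $\bar{R}$ over $\mathbf{g}$ is equivalent to minimizing it over $\mathbf{q}$, and the optimal $\mathbf{g}^*$ is recovered as $\mathbf{g}^*(X)=\mathbf{Q}^{-\top}\mathbf{q}^*(X)$. Substituting $q_i^*(X)=P(\bar{Y}=i|X)$ and invoking the flip-process identity Eq.(\ref{flip_process}), which in vector form reads $P(\bar{Y}|X)=\mathbf{Q}^{\top}P(Y|X)$, yields
\[
\mathbf{g}^*(X)=\mathbf{Q}^{-\top}\mathbf{Q}^{\top}P(Y|X)=P(Y|X),
\]
so that $g_i^*(X)=P(Y=i|X)$ for every $i\in[c]$. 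Finally, since $\bar{f}^*(X)=\arg\max_{i\in[c]} g_i^*(X)=\arg\max_{i\in[c]} P(Y=i|X)$ and, by Assumption \ref{learnability}, $f^*$ has the identical form, I conclude that $\bar{f}^*=f^*$.

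The step I expect to be the main obstacle is the first one: justifying that Assumption \ref{learnability}, stated for the true label $Y$, transfers to the complementary label $\bar{Y}$. This is not automatic from the wording, and the cleanest resolution is to read the assumption as a property of the loss $\ell$ itself (strict properness, or Fisher consistency), so that minimizing the expected $\ell$-risk against any categorical target recovers that target's conditional probabilities. Granted this, the remainder is the short computation above, with invertibility of $\mathbf{Q}$ doing the essential work of both preserving the minimization and uniquely inverting the transition relation.
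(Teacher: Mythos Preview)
Your proposal is correct and follows essentially the same line as the paper's own proof: apply Assumption~\ref{learnability} to the complementary-label problem to obtain $q_i^*(X)=P(\bar{Y}=i|X)$, combine this with the vector identity $P(\bar{Y}|X)=\mathbf{Q}^\top P(Y|X)$ from Eq.~(\ref{flip_process}), and then use invertibility of $\mathbf{Q}$ to conclude $\mathbf{g}^*(X)=P(Y|X)$ and hence $\bar{f}^*=f^*$. The paper handles the point you flag as the main obstacle in exactly the way you suggest---it simply asserts that Assumption~\ref{learnability} carries over to the $\bar{Y}$-prediction problem (treating it as a calibration property of $\ell$), so your reading is the intended one.
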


Please find the detailed proof in Appendix A. 
Given sufficient training data with complementary labels, $\bar{f}_n$ can converge to $\bar{f}^*$, which can be proved in the next section. According to Theorem \ref{thm_main}, this also implies that $\bar{f}_n$ also converges to the optimal classifier $f^*$.


\paragraph{Examples of Loss Functions.}
The proof of Theorem \ref{thm_main} relies on Assumption \ref{learnability}. However, for many loss functions, Assumption \ref{learnability} can be provably satisfied. Here, we take the cross-entropy loss as an example to demonstrate this fact. The cross-entropy loss is widely used in deep supervised learning and is defined as
\begin{equation} \label{modified_loss}
\ell(f(X),Y) = -\sum_{i=1}^c 1(Y=i)\log(g_i(X)),
\end{equation}
where $1(\cdot)$ is an indicator function; that is, if the input statement is true, it outputs 1; otherwise, 0. For the cross-entropy loss, we have the following lemma:
\begin{mylm} \label{lm_prob}
Suppose $\ell$ is the cross-entropy loss and $\mathbf{g}(X) \in \Delta^{c-1}$, where $\Delta^{c-1}$ refers to a standard simplex in $\mathbb{R}^c$; that is, $\forall \mathbf{x} \in \Delta^{c-1}$, $x_i \geq 0, \forall i \in [c]$ and $\sum_{i=1}^c x_i = 1$. By minimizing the expected risk $R(f)$, we have $g_i^*(X) = P(Y=i|X), \forall i \in [c]$.
\end{mylm}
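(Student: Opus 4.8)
The plan is to reduce the global minimization of the expected risk to a pointwise minimization for each fixed $X=\mathbf{x}$, and then solve the resulting constrained problem using the nonnegativity of the Kullback--Leibler divergence (equivalently, Gibbs' inequality). First I would apply the tower property to write $R(f)=\mathbb{E}_X\big[\mathbb{E}_{Y|X}[\ell(f(X),Y)]\big]$ and observe that, since the network output $\mathbf{g}(X)$ may be chosen independently at each input and always lies in $\Delta^{c-1}$, it suffices to minimize the inner conditional expectation separately for each $\mathbf{x}$. Abbreviating $p_i:=P(Y=i\,|\,X=\mathbf{x})$ and $g_i:=g_i(\mathbf{x})$, and using $\mathbb{E}[1(Y=i)\,|\,X=\mathbf{x}]=p_i$, the inner term becomes $\mathbb{E}_{Y|X=\mathbf{x}}[\ell(f(\mathbf{x}),Y)]=-\sum_{i=1}^c p_i\log g_i$, which is exactly the cross-entropy $H(p,g)$ between the true posterior $p$ and the predicted distribution $g$.

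The key step is to minimize $H(p,g)=-\sum_i p_i\log g_i$ over $g\in\Delta^{c-1}$. I would add and subtract the entropy of $p$ to get the identity $H(p,g)=H(p)+D_{\mathrm{KL}}(p\,\|\,g)$, where $H(p)=-\sum_i p_i\log p_i$ is independent of $g$ and $D_{\mathrm{KL}}(p\,\|\,g)=\sum_i p_i\log(p_i/g_i)\ge 0$ by Jensen's inequality applied to the convex map $-\log$, with equality if and only if $g=p$. Hence the pointwise minimizer is $g_i^*=p_i$, i.e. $g_i^*(\mathbf{x})=P(Y=i\,|\,X=\mathbf{x})$ for every $i\in[c]$; since $\mathbf{x}$ was arbitrary this yields the claim. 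As an alternative to the information-theoretic route, a Lagrange-multiplier computation on $-\sum_i p_i\log g_i+\lambda(\sum_i g_i-1)$ gives $g_i=p_i/\lambda$, and the simplex constraint $\sum_i g_i=1$ forces $\lambda=1$, recovering the same minimizer.

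The subtle points I would be careful about are the boundary of the simplex and the realizability implicitly used in the pointwise reduction. When some $p_i=0$ the corresponding term is handled with the convention $0\log 0=0$, so the KL identity remains valid and there is no difficulty as $g_i\to 0$; conversely, whenever $p_i>0$ the optimal $g_i$ must stay bounded away from $0$, so the minimizer sits in the relative interior along those coordinates. The main obstacle is making the pointwise-to-global reduction fully rigorous: it relies on the function class $\mathcal{F}$ being rich enough that the map $\mathbf{x}\mapsto P(Y\,|\,X=\mathbf{x})$ is attainable, so that minimizing each conditional expectation individually genuinely minimizes the full expectation. This realizability is precisely what the setup encodes through $\mathbf{g}(X)\in\Delta^{c-1}$, and I would state it explicitly rather than leave it buried in the pointwise argument.
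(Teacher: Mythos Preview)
Your proposal is correct. The paper's own proof follows exactly the alternative route you sketch at the end: it reduces to the pointwise conditional risk $\psi(\mathbf{g})=-\sum_{i} P(Y=i|\mathbf{x})\log g_i(\mathbf{x})$ and then applies a Lagrange multiplier to the simplex constraint, obtaining $g_i^*=-\lambda^{-1}P(Y=i|\mathbf{x})$ and fixing $\lambda$ via $\sum_i g_i^*=1$.

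Your primary argument is genuinely different in flavor: instead of differentiating, you rewrite the conditional risk as $H(p)+D_{\mathrm{KL}}(p\,\|\,g)$ and invoke Gibbs' inequality. This buys you two things the paper's proof glosses over. First, it handles the boundary of $\Delta^{c-1}$ cleanly (your discussion of $p_i=0$ and the convention $0\log 0=0$), whereas the Lagrange-multiplier computation implicitly assumes an interior optimum with all $g_i>0$. Second, it certifies global optimality directly via convexity/Jensen, rather than merely a stationary point. Conversely, the Lagrange route is shorter and more mechanical. Your explicit remark about the realizability assumption underlying the pointwise-to-global reduction is also a point the paper leaves implicit; flagging it is appropriate, since the conclusion really does require that $\mathbf{x}\mapsto P(Y|X=\mathbf{x})$ be attainable within the hypothesis class.
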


Please see the detailed proof in Appendix B. In fact, losses such as square-error loss $\ell(f(X),Y)=\sum_{j=1}^c (1(Y=j)-g_j(X))^2$, also satisfy Assumption 1. The readers can prove it themselves using similar strategy. Combined with Theorem \ref{thm_main}, we can see, by applying the proposed method to loss functions such as cross-entropy loss, we can prove that the optimal classifier $f^*$ can be found even when learning with biased complementary labels.

\section{Convergence Analysis}
In this section, we show an upper bound for the estimation error of our method. This upper bound illustrates a convergence rate for the classifier learned with complementary labels to the optimal one learned with true labels. Moreover, with the derived bound, we can clearly see that the estimation error could further benefit from the setting of biased complementary labels under mild conditions.

Since $\bar{f}^*=f^*$, we have $|\bar{f}_n-f^*|=|\bar{f}_n-\bar{f}^*|$. We will upper bound the error $|\bar{f}_n-\bar{f}^*|$ via upper bounding $\bar{R}(\bar{f}_n)-\bar{R}(\bar{f}^*)$; that is, when $\bar{R}(\bar{f}_n)-\bar{R}(\bar{f}^*)\to 0$, $|\bar{f}_n-\bar{f}^*| \to 0$. Specifically, it has been proven that
\begin{equation}
\begin{aligned}
\bar{R}(\bar{f}_n)-\bar{R}(\bar{f}^*) &= \bar{R}(\bar{f}_n)-\bar{R}_n(\bar{f}_n)+\bar{R}_n(\bar{f}_n)-\bar{R}_n(\bar{f}^*)+\bar{R}_n(\bar{f}^*)-\bar{R}(\bar{f}^*)\\
&\leq \bar{R}(\bar{f}_n)-\bar{R}_n(\bar{f}_n)+\bar{R}_n(\bar{f}^*)-\bar{R}(\bar{f}^*)\\
&\leq 2\sup_{f\in \mathcal{F}}|\bar{R}(f)-\bar{R}_n(f)|,
\end{aligned}
\end{equation}
where the first inequality holds because $\bar{R}_n(\bar{f}_n)-\bar{R}_n(\bar{f}^*)\leq 0$ and the error in the last line is called the generalization error.

Let $(X_1,\bar{Y}_1),\cdots,(X_n,\bar{Y}_n)$ be independent variables. By employing the concentration inequality \cite{boucheron2013concentration}, the generalization error can be upper bounded by using the method of Rademacher complexity \cite{bartlett2002rademacher}.
\begin{mythm}[\cite{bartlett2002rademacher}]\label{rademacher}
Let the loss function be upper bounded by $M$. Then, for any $\delta>0$, with the probability $1-\delta$, we have
\begin{equation}
\begin{aligned}
\sup_{f\in \mathcal{F}}|\bar{R}(f)-\bar{R}_n(f)|\leq 2\mathfrak{R}_n(\bar{\ell}\circ\mathcal{F})+M\sqrt{\frac{\log{1/\delta}}{2n}},
\end{aligned}
\end{equation}
where $\mathfrak{R}_n(\bar{\ell}\circ\mathcal{F})=\mathbb{E}\left[\sup_{f\in \mathcal{F}}\frac{1}{n}\sum_{i=1}^n\sigma_i\bar{\ell}(f(X_i),\bar{Y}_i)\right]$ is the Rademacher complexity; $\{\sigma_1,\cdots,\sigma_n\}$ are Rademacher variables uniformly distributed from $\{-1,1\}$.
\end{mythm}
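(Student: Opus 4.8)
The plan is to decompose the bound into two classical ingredients: a concentration step that controls the fluctuation of the supremum around its mean, and a symmetrization step that bounds that mean by the Rademacher complexity. Write $\Phi(S) = \sup_{f\in\mathcal{F}}|\bar{R}(f) - \bar{R}_n(f)|$ as a function of the sample $S = \{(X_i,\bar{Y}_i)\}_{1\leq i \leq n}$. Since the loss $\bar{\ell}$ takes values in $[0,M]$, replacing a single example $(X_k,\bar{Y}_k)$ by any other admissible pair perturbs $\bar{R}_n(f)$ by at most $M/n$ for every $f$, and hence perturbs $\Phi$ by at most $M/n$. Thus $\Phi$ satisfies the bounded-differences condition with constants $c_i = M/n$.

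First I would apply McDiarmid's inequality (the bounded-differences inequality from \cite{boucheron2013concentration}) to $\Phi$. With $\sum_{i=1}^n c_i^2 = M^2/n$, this yields, for any $t>0$, that $\Phi(S) \leq \mathbb{E}[\Phi(S)] + t$ holds with probability at least $1 - \exp(-2nt^2/M^2)$. Setting the failure probability equal to $\delta$ gives $t = M\sqrt{\log(1/\delta)/(2n)}$, so that with probability $1-\delta$,
\begin{equation}
\sup_{f\in\mathcal{F}}|\bar{R}(f) - \bar{R}_n(f)| \leq \mathbb{E}[\Phi(S)] + M\sqrt{\frac{\log(1/\delta)}{2n}}.
\end{equation}
It therefore remains to show $\mathbb{E}[\Phi(S)] \leq 2\mathfrak{R}_n(\bar{\ell}\circ\mathcal{F})$, which reduces the problem to bounding a single expectation.

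The second step is the symmetrization argument, which I expect to be the main obstacle. I would introduce a ghost sample $S' = \{(X_i',\bar{Y}_i')\}$ drawn i.i.d. from $P_{X\bar{Y}}$ and independent of $S$, so that $\bar{R}(f) = \mathbb{E}_{S'}[\frac{1}{n}\sum_i \bar{\ell}(f(X_i'),\bar{Y}_i')]$. Rewriting $\bar{R}(f) - \bar{R}_n(f)$ as an expectation over $S'$, then using convexity of the absolute value together with the inequality $\sup \mathbb{E} \leq \mathbb{E}\sup$, gives
\begin{equation}
\mathbb{E}[\Phi(S)] \leq \mathbb{E}_{S,S'}\left[\sup_{f\in\mathcal{F}}\left|\frac{1}{n}\sum_{i=1}^n \left(\bar{\ell}(f(X_i'),\bar{Y}_i') - \bar{\ell}(f(X_i),\bar{Y}_i)\right)\right|\right].
\end{equation}
Because $(X_i,\bar{Y}_i)$ and $(X_i',\bar{Y}_i')$ are identically distributed and independent, swapping the $i$-th pair leaves the law of the summand unchanged, so I may insert independent Rademacher signs $\sigma_i$ in front of each difference without altering the expectation (conditioning on $\sigma$ via Fubini). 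Splitting the two terms by the triangle inequality and the sub-additivity of the supremum then bounds the right-hand side by $2\mathbb{E}_{S,\sigma}\left[\sup_{f\in\mathcal{F}}\left|\frac{1}{n}\sum_{i=1}^n \sigma_i \bar{\ell}(f(X_i),\bar{Y}_i)\right|\right] = 2\mathfrak{R}_n(\bar{\ell}\circ\mathcal{F})$, and combining this with the concentration bound completes the proof.

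The delicate points are the justification of the sign-insertion step, which relies on the i.i.d. structure of $S$ and $S'$, and the passage between the absolute-value form produced by symmetrization and the signed form appearing in the definition of $\mathfrak{R}_n$; the latter is handled by the standard convention for the Rademacher complexity of a loss class as used in \cite{bartlett2002rademacher}. Everything else is routine, so the symmetrization is the only genuinely technical ingredient.
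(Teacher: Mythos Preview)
Your proposal is the standard two-step argument (bounded differences/McDiarmid for concentration, then ghost-sample symmetrization), and it is correct. Note, however, that the paper does not actually supply a proof of this statement: Theorem~\ref{rademacher} is quoted verbatim as a known result from \cite{bartlett2002rademacher}, with the surrounding text only remarking that the bound follows ``by employing the concentration inequality \cite{boucheron2013concentration}\ldots\ using the method of Rademacher complexity \cite{bartlett2002rademacher}.'' Your write-up is precisely the argument those two references encode, so there is nothing to compare against beyond confirming that you have reconstructed the intended textbook proof.

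One small point worth tightening: the symmetrization step, as you run it, terminates with $2\,\mathbb{E}_{S,\sigma}\bigl[\sup_{f}\bigl|\tfrac{1}{n}\sum_i \sigma_i\bar{\ell}(f(X_i),\bar{Y}_i)\bigr|\bigr]$, i.e.\ the absolute-value Rademacher complexity, whereas the paper's displayed definition of $\mathfrak{R}_n(\bar{\ell}\circ\mathcal{F})$ omits the absolute value. You flag this and defer to ``standard convention,'' which is fine in spirit---\cite{bartlett2002rademacher} in fact defines the complexity with the absolute value, so the statement and its source are consistent---but if you want the bound exactly as written you should either (i) note that for a nonnegative loss class the two quantities differ by at most an additive constant absorbed into the $M$ term, or (ii) simply invoke the original definition from \cite{bartlett2002rademacher}. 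Either way this is cosmetic; the substance of your argument is complete.
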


Before upper bounding $\mathfrak{R}_n(\bar{\ell}\circ\mathcal{F})$, we need to discuss the specific form of the employed loss function $\bar{\ell}$. By exploiting the well-defined binary loss functions, one-versus-all and pairwise-comparison loss functions \cite{zhang2004statistical} have been proposed for multi-class learning.
In this section, we discuss the modified loss function $\bar{\ell}$ defined by Eqs. (\ref{modified_loss0}) and (\ref{modified_loss}), which can be rewritten as,
\begin{equation}\label{loss}
\begin{aligned}
\bar{\ell}(f(X),\bar{Y}) &= -\sum_{i=1}^c 1(\bar{Y}=i)\log\left((\mathbf{Q}^\top \mathbf{g})_i(X)\right) \\
&= -\sum_{i=1}^c 1(\bar{Y}=i)\log\left(\frac{\sum_{j=1}^c Q_{ji} \exp(h_j(X))}{\sum_{k=1}^c \exp(h_k(X))} \right),
\end{aligned}
\end{equation}
where $(\mathbf{Q}^\top \mathbf{g})_i$ denotes the $i$-th entry of $\mathbf{Q}^\top \mathbf{g}$; $\mathbf{h}:\mathcal{X}\to  \mathbb{R}^c$, $h_i(X) \in \mathcal{H}, \forall i \in [c]$; and $g_i(X)=\frac{\exp(h_i(X))}{\sum_{k=1}^c \exp(h_k(X))}$.

Usually, the convergence rates of generalization bounds of multi-class learning are at most $O(c^2/\sqrt{n})$ with respect to $c$ and $n$ \cite{ishida2017learning,mohri2012foundations}. To reduce the dependence on $c$ of our derived convergence rate, we rewrite $\bar{R}(f)$ as follows:
\begin{equation}
\begin{aligned}
&\bar{R}(f) =\int_{X} \sum_{i=1}^c P(\bar{Y}=i) P(X|\bar{Y}=i) \bar{\ell}(f(X),\bar{Y}=i) dX \\
&=\sum_{i=1}^c P(\bar{Y}=i) \int_{X} P(X|\bar{Y}=i)  \bar{\ell}(f(X),\bar{Y}=i) dX \\
&=\sum_{i=1}^c \bar{\pi}_i \bar{R}_{i}(f),
\end{aligned}
\end{equation}
where $\bar{R}_{i}( f) = \mathbb{E}_{X\sim P(X|\bar{Y}=i)} \bar{\ell}(f(X),\bar{Y}=i)$ and $\bar{\pi}_i=P(\bar{Y}=i)$. 

Similar to Theorem \ref{rademacher}, we have the following theorem.
\begin{mythm}\label{our_rademacher}
Suppose $\bar{\pi}_i=P(\bar{Y}=i)$ is given. Let the loss function be upper bounded by $M$. Then, for any $\delta>0$, with the probability $1-c\delta$, we have
\begin{equation}
\begin{aligned}
\bar{R}(\bar{f}_n)-\bar{R}(\bar{f}^*) &\leq 2\sup_{f\in \mathcal{F}}|\bar{R}(f)-\bar{R}_n(f)| \\
&\leq2\sum_{i=1}^c \bar{\pi}_i\sup_{f\in \mathcal{F}} |\bar{R}_{i}(f)-\bar{R}_{i,n_i}(f)|\\
&\leq 2\sum_{i=1}^c \bar{\pi}_i\left(2\mathfrak{R}_{n_{i}}( \bar{\ell}\circ\mathcal{F})+M\sqrt{\frac{\log{1/\delta}}{2n_{i}}}\right)\\
&= \sum_{i=1}^c \left( 4 \bar{\pi}_i \mathfrak{R}_{n_{i}}( \bar{\ell}\circ\mathcal{F}) + 2\bar{\pi}_iM\sqrt{\frac{\log{1/\delta}}{2n_{i}}} \right),
\end{aligned}
\end{equation}
where $\mathfrak{R}_{n_{i}}(\bar{\ell}\circ\mathcal{F})=\mathbb{E}\left[\sup_{f\in \mathcal{F}}\frac{1}{n_i}\sum_{j=1}^{n_i}\sigma_j\bar{\ell}(f(X_j),\bar{Y}_j=i)\right]$ and
$\bar{R}_{i,n_i}(f)$ is the empirical counterpart of $\bar{R}_i(f)$, and $n_{i},i\in [c]$, represents the numbers of $X$ whose complementary labels are $\bar{Y}=i$.
\end{mythm}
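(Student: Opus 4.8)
The plan is to follow the chain of inequalities displayed in the statement, so the proof splits into four moves: reuse the already-established symmetrization bound for the first line, stratify the risk by the value of the complementary label for the second line, apply the single-distribution Rademacher bound of Theorem \ref{rademacher} class by class for the third line, and finally rearrange algebraically for the equality. The first inequality, $\bar R(\bar f_n)-\bar R(\bar f^*)\le 2\sup_{f\in\mathcal F}|\bar R(f)-\bar R_n(f)|$, was derived just above the statement from the optimality of $\bar f_n$ for $\bar R_n$ together with the triangle inequality, so I would simply cite it.

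For the second inequality I would exploit the decomposition $\bar R(f)=\sum_{i=1}^c \bar\pi_i \bar R_i(f)$ introduced in the preceding display, together with the matching decomposition of the empirical risk obtained by grouping the $n$ training examples according to their complementary label. Writing $\bar R_{i,n_i}(f)=\frac{1}{n_i}\sum_{j:\bar Y_j=i}\bar\ell(f(X_j),i)$ for the empirical risk on the $n_i$ examples carrying label $i$, and using that $\bar\pi_i$ is \emph{given} rather than estimated, one has $\bar R_n(f)=\sum_{i=1}^c \bar\pi_i \bar R_{i,n_i}(f)$. Subtracting, applying the triangle inequality term by term, and using $\sup_f\sum_i a_i(f)\le\sum_i\sup_f a_i(f)$ then yields $\sup_f|\bar R(f)-\bar R_n(f)|\le\sum_{i=1}^c\bar\pi_i\sup_f|\bar R_i(f)-\bar R_{i,n_i}(f)|$.

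The third inequality is where the content of Theorem \ref{rademacher} is reused. The crucial observation is that, conditioned on $\bar Y=i$, the $n_i$ features are i.i.d.\ draws from $P(X\mid\bar Y=i)$, that $\bar R_i(f)$ is exactly the expected loss under this conditional law, and that $\bar R_{i,n_i}(f)$ is its empirical counterpart. Hence Theorem \ref{rademacher}, which bounds the generalization gap of a single i.i.d.\ sample of size $n_i$, applies verbatim to each class, giving $\sup_f|\bar R_i(f)-\bar R_{i,n_i}(f)|\le 2\mathfrak R_{n_i}(\bar\ell\circ\mathcal F)+M\sqrt{\log(1/\delta)/(2n_i)}$ with probability at least $1-\delta$. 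Inserting these $c$ per-class bounds into the previous line and distributing the factor $2\bar\pi_i$ produces the final equality.

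The main obstacle I anticipate is not any single inequality but the bookkeeping that makes the probability statement correct: each per-class bound holds only on an event of probability $1-\delta$, so I must take a union bound over the $c$ classes to guarantee that all of them hold simultaneously, which is precisely what degrades the confidence from $1-\delta$ to $1-c\delta$ (note that no independence across classes is needed for this step). A secondary point requiring care is the sampling model implicit in the hypothesis that $\bar\pi_i$ is ``given'': treating the class proportions as known is what lets me write $\bar R_n(f)=\sum_i\bar\pi_i\bar R_{i,n_i}(f)$ exactly, rather than incurring an extra concentration term for $|n_i/n-\bar\pi_i|$, and it is what makes the per-class samples genuinely i.i.d.\ from the conditional distributions so that Theorem \ref{rademacher} is legitimately applicable. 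I would therefore state this modeling choice explicitly before carrying out the estimate.
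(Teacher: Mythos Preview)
Your proposal is correct and follows precisely the approach the paper intends: the paper does not give a separate proof of this theorem but presents the chain of inequalities as essentially self-evident given the decomposition $\bar R(f)=\sum_i\bar\pi_i\bar R_i(f)$ and a per-class application of Theorem~\ref{rademacher}, with the union bound accounting for the $1-c\delta$ probability. Your explicit discussion of why the assumption ``$\bar\pi_i$ is given'' is needed to write $\bar R_n(f)=\sum_i\bar\pi_i\bar R_{i,n_i}(f)$ (rather than $\sum_i(n_i/n)\bar R_{i,n_i}(f)$) and to make the per-class samples i.i.d.\ is a point the paper leaves implicit, so your write-up is in fact more careful than the original.
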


Due to the fact that $\bar{\ell}$ is actually defined with respect to $\mathbf{h}$ rather than $f$, we would like to bound the error by the Rademacher complexity of $\mathcal{H}$. We observe that the relationship between $\mathfrak{R}_{n_{i}}(\bar{\ell}\circ\mathcal{F})$ and $\mathfrak{R}_{n_{i}}(\mathcal{H})$ is:
\begin{mylm} \label{new_rademarcher_comp}
Let $\bar{\ell}(f(X),\bar{Y}=i) = -\log\left(\frac{\sum_{k=1}^c Q_{ki} \exp(h_k(X))}{\sum_{k=1}^c \exp(h_k(X))} \right)$ and suppose that $h_i(X) \in \mathcal{H}, \forall i \in [c]$, we have $\mathfrak{R}_{n_{i}}(\bar{\ell}\circ\mathcal{F}) \leq c \mathfrak{R}_{n_{i}}(\mathcal{H})$.
\end{mylm}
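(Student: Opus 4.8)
The plan is to view the composite loss as a fixed scalar function applied to the vector of score functions and then strip that function off with a contraction argument; the factor $c$ will emerge as a sum of $c$ identical coordinate contributions. Concretely, I would write $\bar\ell(f(X),\bar Y=i)=\Phi_i(h_1(X),\dots,h_c(X))$ where $\Phi_i(\mathbf u)=\log\big(\sum_{k}e^{u_k}\big)-\log\big(\sum_{k}Q_{ki}e^{u_k}\big)$, and note that the hypothesis class $\mathcal F$ is in bijection with the product $\mathcal H^c$, since each coordinate $h_m$ ranges over $\mathcal H$ independently. This product structure is what will let each coordinate contribute exactly one copy of $\mathfrak R_{n_i}(\mathcal H)$.

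The first substantive step is to bound the Lipschitz behaviour of $\Phi_i$ coordinatewise. I would compute $\partial \Phi_i/\partial u_m = e^{u_m}/\sum_k e^{u_k} - Q_{mi}e^{u_m}/\sum_k Q_{ki}e^{u_k}$ and observe that the first term is a softmax weight lying in $[0,1]$, while the second term also lies in $[0,1]$ because $Q_{mi}\in[0,1]$ and $Q_{mi}e^{u_m}\le \sum_k Q_{ki}e^{u_k}$. Hence $|\partial\Phi_i/\partial u_m|\le 1$ for every $m$, so $\Phi_i$ is $1$-Lipschitz in each of its $c$ arguments. It is precisely this per-coordinate Lipschitz constant of $1$ that produces the clean constant $c$ rather than an extra numerical factor.

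Next I would peel $\Phi_i$ off the supremum. The idea is a Talagrand-type contraction applied one coordinate at a time: fixing the other score functions turns $\Phi_i$ into a $1$-Lipschitz scalar map in the remaining coordinate, so the scalar contraction lemma replaces that coordinate's contribution to $\mathfrak R_{n_i}(\bar\ell\circ\mathcal F)$ by $\mathfrak R_{n_i}(\mathcal H)$; summing the $c$ coordinate contributions gives $c\,\mathfrak R_{n_i}(\mathcal H)$. Equivalently, one can invoke a vector-valued contraction inequality for functions that are $1$-Lipschitz in each coordinate, again exploiting the product structure $\mathcal F\cong\mathcal H^c$ to split the resulting bound into $c$ identical terms.

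The hardest part is making this decomposition fully rigorous, because a naive iteration of the scalar contraction lemma mishandles the joint supremum over $(h_1,\dots,h_c)$: after contracting in one coordinate, the remaining supremum over the other coordinates must still be controlled, and the coordinates are coupled through $\Phi_i$. The clean way around this is to use a multivariate contraction bound keyed to the coordinatewise Lipschitz constants established above, so that the coupling is absorbed and each of the $c$ coordinates contributes exactly $\mathfrak R_{n_i}(\mathcal H)$. Everything else—the derivative computation and the product-structure bookkeeping—is routine once the coordinatewise bound $|\partial\Phi_i/\partial u_m|\le 1$ is in hand.
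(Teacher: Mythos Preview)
Your proposal is correct and follows essentially the same route as the paper: the paper first proves the coordinatewise $1$-Lipschitz bound by exactly your derivative computation (showing $|\partial\bar\ell/\partial h_j|\le 1$), then splits the joint supremum over $(h_1,\dots,h_c)$ into a sum of $c$ single-coordinate suprema and applies Talagrand's contraction to each, yielding $c\,\mathfrak R_{n_i}(\mathcal H)$. Your caution about the joint supremum is well placed---the paper handles that passage rather informally by simply writing $\sup_{(h_1,\dots,h_c)}\le \sum_k \sup_{h_k}$---so your suggestion to invoke a vector-valued contraction lemma is, if anything, a cleaner justification of the same step.
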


The detailed proof can be found in Appendix C. Combine Theorem \ref{our_rademacher} and Lemma \ref{new_rademarcher_comp}, we have the final result:
\begin{mycor}
Suppose $\bar{\pi}_i=P(\bar{Y}=i)$ is given. Let the loss function be upper bounded by $M$. Then, for any $\delta>0$, with the probability $1-c\delta$, we have
\begin{equation}
\bar{R}(\bar{f}_n)-\bar{R}(\bar{f}^*) \leq \sum_{i=1}^c \left(4c \bar{\pi}_i \mathfrak{R}_{n_{i}}(\mathcal{H}) + 2\bar{\pi}_i M\sqrt{\frac{\log{1/\delta}}{2n_{i}}}\right).
\end{equation}
\end{mycor}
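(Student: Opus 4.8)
The plan is to obtain the corollary by directly substituting the Rademacher-complexity bound of Lemma~\ref{new_rademarcher_comp} into the estimation-error bound of Theorem~\ref{our_rademacher}, so that all the probabilistic content is inherited from the latter and only an elementary monotonicity argument remains.

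First I would invoke Theorem~\ref{our_rademacher}: for any $\delta>0$, with probability at least $1-c\delta$,
\begin{equation}
\bar{R}(\bar{f}_n)-\bar{R}(\bar{f}^*) \leq \sum_{i=1}^c \left( 4 \bar{\pi}_i \mathfrak{R}_{n_{i}}(\bar{\ell}\circ\mathcal{F}) + 2\bar{\pi}_iM\sqrt{\frac{\log{1/\delta}}{2n_{i}}} \right).
\end{equation}
The event on which this inequality holds is fixed once and for all at this stage; every manipulation that follows is deterministic, so the confidence level $1-c\delta$ is carried through unchanged.

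Next I would bound each Rademacher term. Lemma~\ref{new_rademarcher_comp} gives $\mathfrak{R}_{n_{i}}(\bar{\ell}\circ\mathcal{F}) \leq c\,\mathfrak{R}_{n_{i}}(\mathcal{H})$ for every $i\in[c]$. Since $\bar{\pi}_i=P(\bar{Y}=i)\geq 0$, multiplying this inequality by the nonnegative factor $4\bar{\pi}_i$ preserves its direction and yields $4\bar{\pi}_i\mathfrak{R}_{n_{i}}(\bar{\ell}\circ\mathcal{F}) \leq 4c\,\bar{\pi}_i\mathfrak{R}_{n_{i}}(\mathcal{H})$. Adding the unchanged deviation term $2\bar{\pi}_iM\sqrt{\log(1/\delta)/(2n_i)}$ to both sides and summing over $i=1,\dots,c$ reproduces exactly the asserted bound.

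The point I would stress is that there is no genuine obstacle internal to the corollary itself: the difficulty is entirely absorbed into the two results it rests upon. The one piece of bookkeeping worth verifying is the failure probability. Because Lemma~\ref{new_rademarcher_comp} is a deterministic statement relating two expectations (the Rademacher complexities are not random events), substituting it introduces no further bad event and hence requires no additional union bound; the probability therefore remains $1-c\delta$ rather than degrading. The only other thing to check is monotonicity under the nonnegative weights $\bar{\pi}_i$, which is immediate.
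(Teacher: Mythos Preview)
Your proposal is correct and matches the paper's own reasoning exactly: the paper simply states ``Combine Theorem~\ref{our_rademacher} and Lemma~\ref{new_rademarcher_comp}, we have the final result,'' which is precisely the substitution argument you spell out. Your additional remarks about why the confidence level stays at $1-c\delta$ (Lemma~\ref{new_rademarcher_comp} being deterministic) and about monotonicity under the nonnegative weights $\bar{\pi}_i$ make explicit what the paper leaves implicit, but the approach is the same.
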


In current state-of-the-art methods \cite{ishida2017learning}, the convergence rate of $\mathfrak{R}_n(\bar{\ell}\circ\mathcal{F})$ is of order $O(c^2/\sqrt{n})$ with respect to $c$ and $n$ while our derived bound $\sum_{i}^c 4c \bar{\pi}_i \mathfrak{R}_{n_{i}}(\mathcal{H})$ is of order $max_{i\in[c]}O(c/\sqrt{n_i})$. Since our error bound depends on $n_i$, the bound would be loose if $n_i$ (or $\bar{\pi}_i$) is small. However, if $\bar{\pi}_i$ is balanced and $n_i$ is about $n/c$, our convergence rate is of order $O(c\sqrt{c}/ \sqrt{n})$, which is smaller than the error bounds provided by previous methods if $c$ is very large. 

$\textbf{Remark.}$ Theorem 3 and Corollary 1 aim to provide the proof of uniform convergence for general losses and show how the convergence rate can benefit from the biased setting under mild conditions. Thus, assuming the loss is upper-bounded is reasonable for many loss functions such as the square-error loss. If the readers would like to derive specific error bound for the cross-entropy loss, strategies in \cite{wan2013regularization} can be employed. If we assume that the transition matrix $Q$ is invertible, we can derive similar results as those in Lemma 1-3 \cite{wan2013regularization} for the modified loss function, which can be finally deployed to derive generalization error bound similar to Corollary 1.


\section{Estimating $\mathbf{Q}$}
In the aforementioned method, transition matrix $\mathbf{Q}$ is assumed to be known, which is not true. Here, we thus provide an efficient method to estimate $\mathbf{Q}$.

When learning with complementary labels, we completely lose the information of true labels. Without any auxiliary information, it is impossible to estimate the transition matrix which is associated with the class priors of true labels. On the other hand, although it is costly to annotate a very large-scale dataset, a small set of easily distinguishable observations are assumed to be available in practice. This assumption is also widely used in estimating transition probabilities in label noise problem \cite{vahdat2017toward} and class priors in semi-supervised learning \cite{yu2018efficient}. Therefore, in order to estimate $\mathbf{Q}$, we manually assign true labels to 5 or 10 observations in each class. Since these selected observations are often easy to classify, we further assume that they satisfy the anchor set condition \cite{liu2016classification}:

\begin{myasum} [Anchor Set Condition]
For each class $y$, there exists an anchor set $\mathcal{S}_{\mathbf{x}|y} \subset \mathcal{X}$ such that $P(Y=y|X=\mathbf{x})=1$ and $P(Y=y'|X=\mathbf{x})=0$, $\forall y'\in \mathcal{Y}\setminus \{y\}, \mathbf{x} \in \mathcal{S}_{\mathbf{x}|y}$.
\end{myasum}
Here, $\mathcal{S}_{\mathbf{x}|y}$ is a subset of features in class $y$. Given several observations in $\mathcal{S}_{\mathbf{x}|y}, y\in [c]$, we are ready to estimate the transition matrix $\mathbf{Q}$. According to Eq. (\ref{flip_process}),
\begin{equation}
P(\bar{Y}=\bar{y}|X) = \sum_{y'\neq \bar{y}} P(\bar{Y}=\bar{y}|Y=y') P(Y=y'|X).
\end{equation}

Suppose $\mathbf{x} \in \mathcal{S}_{\mathbf{x}|y}$, then $P(Y=y|X=\mathbf{x})=1$ and $P(Y=y'|X=\mathbf{x})=0, \forall y'\in \mathcal{Y}\setminus \{y\}$. We have
\begin{equation} \label{estimate_pyy}
P(\bar{Y}=\bar{y}|X=\mathbf{x}) = P(\bar{Y}=\bar{y}|Y=y).
\end{equation}

That is, the probabilities in $\mathbf{Q}$ can be obtained via $P(\bar{Y}|X)$ given the observations in the anchor set of each class. Thus, we need only to estimate this conditional probability, which has been proven to be achievable in Lemma \ref{lm_prob}. In this paper, with the training sample $\{(x_i,\bar{y}_i)\}_{1\leq i\leq n}$, we estimate $P(\bar{Y}|X)$ by training a deep neural network with the softmax function and cross-entropy loss. After obtaining these conditional probabilities, each probability $P(\bar{Y}=\bar{y}|Y=y)$ in the transition matrix can be estimated by averaging the conditional probabilities $P(\bar{Y}=\bar{y}|X=\mathbf{x})$ on the anchor data $\mathbf{x}$ in class $y$. 

\section{Experiments}
We evaluate our algorithm on several benchmark datasets including the UCI datasets, USPS, MNIST \cite{mnistdigits}, CIFAR10, CIFAR100 \cite{krizhevsky2009learning}, and Tiny ImageNet\footnote{The dataset is available at \url{http://cs231n.stanford.edu/tiny-imagenet-200.zip}}. All our experiments are trained on neural networks. For USPS and UCI datasets, we employ a one-hidden-layer neural network ($d$-3-$c$) \cite{ishida2017learning}. For MNIST, LeNet-5 \cite{lecun1998gradient} is deployed, and ResNet \cite{he2016deep} is exploited for the other datasets. All models are implemented in PyTorch\footnote{\url{http://pytorch.org}}.


\subsection{UCI and USPS}
We first evaluate our method on USPS and six UCI datasets: WAVEFORM1, WAVEFORM2, SATIMAGE, PENDIGITS, DRIVE, and LETTER, downloaded from the UCI machine learning repository. We apply the same strategies of annotating complementary labels, standardization, validation, and optimization with those in \cite{ishida2017learning}. The learning rate is chosen from $\{10^{-5}, \cdots, 10^{-1}\}$, weight decay from $\{10^{-7},10^{-4},10^{-1}\}$, batch size 100.

For fair comparison in these experiments, we assume the transition probabilities are identical and known as prior. Thus, no examples with true labels are required here. All results are shown in Table \ref{tab_uci}. Our loss modification (“LM”) method is compared to a partial label (PL) method \cite{cour2011learning}, a multi-label (ML) method \cite{read2011classifier}, and ``PC/S'' (the pairwise-comparison formulation with sigmoid loss), which achieved the best performance in [12]. We can see, ``PC/S'' achieves very good performances. The relatively higher performance of our method may be due to that our method provides an unbiased estimator for risk minimizer.

\begin{table}[t!]
\centering
\caption{Classification accuracy on USPS and UCI datasets: the means and standard deviations of classification accuracy over 20 trials in percentages are reported. ``\#train'' is the number of training and validation examples in each class. ``\#test'' is the number of test examples in each class.}
\label{tab_uci}
\small
\begin{tabular}{c|c|c|c|c|c|c|c|c}
\hline
Dataset & $c$ & $d$ & \#train & \#test & PC/S & PL & ML & LM (ours) \\ \hline
WAVEFORM1 & $1 \sim 3$ & 21 & 1226 & 398 & \textbf{85.8 (0.5)} & \textbf{85.7 (0.9)} & 79.3 (4.8) & \textbf{85.1 (0.6)} \\ \hline
WAVEFORM2 & $1 \sim 3$ & 40 & 1227 & 408 & \textbf{84.7 (1.3)} & \textbf{84.6 (0.8)} & 74.9 (5.2) & \textbf{85.5 (1.1)} \\ \hline
SATIMAGE & $1 \sim 7$ & 36 & 415 & 211 & \textbf{68.7 (5.4)} & 60.7 (3.7) & 33.6 (6.2) & \textbf{69.3 (3.6)} \\ \hline
\multirow{5}{*}{PENDIGITS} & $1 \sim 5$ & \multirow{5}{*}{16} & 719 & 336 & \textbf{87.0 (2.9)} & 76.2 (3.3) & 44.7 (9.6) & \textbf{92.7 (3.7)} \\
                           &  $6 \sim 10$ & & 719 & 335 & 78.4 (4.6) & 71.1 (3.3) & 38.4 (9.6) & \textbf{85.8 (1.3)} \\
                           & even \# & & 719 & 336 & \textbf{90.8} (2.4) & 76.8 (1.6) & 43.8 (5.1) & \textbf{90.0 (1.0)} \\
                           & odd \# & & 719 & 335 & 76.0 (5.4) & 67.4 (2.6) & 40.2 (8.0) & \textbf{86.5 (0.5)} \\
                           & $1 \sim 10$ & & 719 & 335 & 38.0 (4.3) & 33.2 (3.8) & 16.1 (4.6) & \textbf{62.8 (5.6)} \\ \hline
\multirow{5}{*}{DRIVE} & $1 \sim 5$ & \multirow{5}{*}{48} & 3955 & 1326 & \textbf{89.1 (4.0)} & 77.7 (1.5) & 31.1 (3.5) & \textbf{93.3 (4.6)} \\
                           &  $6 \sim 10$ & & 3923 & 1313 & 88.8 (1.8) & 78.5 (2.6) & 30.4 (7.2) & \textbf{92.8 (0.9)} \\
                           & even \# & & 3925 & 1283 & \textbf{81.8 (3.4)} & 63.9 (1.8) & 29.7 (6.3) & \textbf{84.3 (0.7)} \\
                           & odd \# & & 3939 & 1278 & \textbf{85.4 (4.2)} & 74.9 (3.2) & 27.6 (5.8) & \textbf{85.9 (2.1)} \\
                           & $1 \sim 10$ & & 3925 & 1269 & 40.8 (4.3) & 32.0 (4.1) & 12.7 (3.1) & \textbf{75.1 (3.2)} \\ \hline 
\multirow{6}{*}{LETTER} & $1 \sim 5$ & \multirow{6}{*}{16} & 565 & 171 & \textbf{79.7 (5.4)} & 75.1 (4.4) & 28.3 (10.4) & \textbf{84.3 (1.5)} \\
                           &  $6 \sim 10$ & & 550 & 178 & 76.2 (6.2) & 66.8 (2.5) & 34.0 (6.9) & \textbf{84.4 (1.0)} \\
                           & $11 \sim 15$ & & 556 & 177 & 78.3 (4.1) & 67.4 (3.4) & 28.6 (5.0) & \textbf{88.3 (1.9)} \\
                           & $16 \sim 20$ & & 550 & 184 & 77.2 (3.2) & 68.4 (2.1) & 32.7 (6.4) & \textbf{85.2 (0.7)} \\
                           & $21 \sim 25$ & & 585 & 167 & 80.4 (4.2) & 75.1 (1.9) & 32.0 (5.7) & \textbf{82.5 (1.0)} \\
                           & $1 \sim 25$ & & 550 & 167 & \textbf{5.1 (2.1)} & \textbf{5.0 (1.0)} & \textbf{5.2 (1.1)} & \textbf{7.0 (3.6)} \\ \hline
\multirow{5}{*}{USPS} & $1 \sim 5$ & \multirow{5}{*}{256} & 652 & 166 & \textbf{79.1 (3.1)} & 70.3 (3.2) & 44.4 (8.9) & \textbf{86.4 (4.5)} \\
                           &  $6 \sim 10$ & & 542 & 147 & 69.5 (6.5) & 66.1 (2.4) & 37.3 (8.8) & \textbf{88.1 (2.7)} \\
                           & even \# & & 556 & 147 & 67.4 (5.4) & 66.2 (2.3) & 35.7 (6.6) & \textbf{79.5 (5.4)} \\
                           & odd \# & & 542 & 147 & 77.5 (4.5) & 69.3 (3.1) & 36.6 (7.5) & \textbf{86.3 (3.1)} \\
                           & $1 \sim 10$ & & 542 & 127 & \textbf{30.7 (4.4)} & 26.0 (3.5) & 13.3 (5.4) & \textbf{37.2 (5.4)} \\ \hline                
\end{tabular}
\end{table}

\subsection{MNIST}
MNIST is a handwritten digit dataset including 60,000 training images and 10,000 test images from 10 classes. To evaluate the effectiveness of our method, we consider the following three settings: (1) for each image in class $y$, the complementary label is uniformly selected from $\mathcal{Y}\setminus \{y\}$ (``\textbf{uniform}''); (2) the complementary label is non-uniformly selected, but each label in $\mathcal{Y}\setminus \{y\}$ has non-zero probability to be selected (``\textbf{without0}''); (3) the complementary label is non-uniformly selected from a small subset of $\mathcal{Y}\setminus \{y\}$ (``\textbf{with0}''). 

To generate complementary labels, we first give the probability of each complementary label to be selected. In the ``uniform'' setting, $P(\bar{Y}=j|Y=i)=\frac{1}{9}, \forall i \neq j$. In the ``without0'' setting, for each class $y$, we first randomly split $\mathcal{Y}\setminus \{y\}$ to three subsets, each containing three elements. Then, for each complementary label in these three subsets, the probabilities are set to $\frac{0.6}{3}$, $\frac{0.3}{3}$, and $\frac{0.1}{3}$, respectively. In the ``with0'' setting, for each class $y$, we first randomly selected three labels in $\mathcal{Y}\setminus \{y\}$, and then randomly assign them with three probabilities whose summation is 1. After $\mathbf{Q}$ is given, we assign complementary label to each image based on these probabilities. Finally, we randomly set aside 10\% of training data as validation set.

In all experiments, the learning rate is fixed to $1e-4$; batch size 128; weight decay $1e-4$; maximum iterations 60,000; and stochastic gradient descend (SGD) with momentum $\gamma=0.9$ \cite{sutskever2013importance} is applied to optimize deep models.  Note that, as shown in \cite{ishida2017learning} and previous experiments, \cite{ishida2017learning} and our method have surpassed baseline methods such as PL and ML. In the following experiments, we will not again make comparisons with these baselines.

The results are shown in Table \ref{tab_mnist}. The means and standard deviations of classification accuracy over five trials are reported. Note that the digit data features are not too entangled, making it easier to learn a good classifier. However, we can still see the differences in the performance caused by the change of settings for annotating complementary labels. According to the results shown in Table \ref{tab_mnist}, ``PC/S'' \cite{ishida2017learning} works relatively well under the uniform assumption but the accuracy deteriorates in other settings. Our method performs well in all settings. It can also be seen that due to the accurate estimates of these probabilities, ``LM/E'' with the estimated transition matrix $\mathbf{Q}$ is competitive with ``LM/T'' which exploits the true one.

\begin{table}[t!]
\centering
\caption{Classification accuracy on MNIST: the means and standard deviations of classification accuracy over five trials in percentages are reported. ``TL'' denotes the result of learning with true labels. ``LM/T'' and ``LM/E'' refer to our method with the true $\mathbf{Q}$ and the estimated one, respectively.}
\label{tab_mnist}
\begin{tabular}{l|c|c|c}
\hline
Method & Uniform & Without0 & With0 \\ \hline
TL & 99.12 & 99.12 & 99.12 \\
PC/S & $86.59\pm3.99$ & $76.03\pm3.34$  & $29.12\pm1.94$ \\
LM/T & $97.18\pm0.45$ & $97.65\pm0.15$ & $98.63\pm0.05$ \\
LM/E & $96.33\pm0.31$ &  $97.04\pm0.31$ & $98.61\pm0.05$ \\
\hline
\end{tabular}
\end{table}

\subsection{CIFAR10}
\begin{table}[t!]
\centering
\caption{Classification accuracy on CIFAR10: the means and standard deviations of classification accuracy over five trials in percentages are reported. ``TL'' denotes the result of learning with true labels. ``LM/T'' and ``LM/E'' refer to our method with the true $\mathbf{Q}$ and the estimated one, respectively.}
\label{tab_cifar10}
\begin{tabular}{l|c|c|c}
\hline
Method & Uniform & Without0 & With0 \\ \hline
TL & 90.78 & 90.78 & 90.78 \\
PC/S & $41.19\pm0.04$ & $42.97\pm3.00$  & $18.12\pm1.45$ \\
LM/T & $73.38\pm1.06$ & $78.80\pm0.45$ & $85.32\pm1.11$ \\
LM/E & $42.96\pm0.76$ &  $70.56\pm0.34$ & $84.60\pm0.14$ \\
\hline
\end{tabular}
\end{table}
We evaluate our method on the CIFAR10 dataset under the aforementioned three settings. CIFAR10 has totally 10 classes of tiny images, which includes 50,000 training images and 10,000 test images. We leave out 10\% of the training data as validation set. In these experiments, ResNet-18 \cite{he2016deep} is deployed. We start with an initial learning rate 0.01 and divide it by 10 after 40 and 80 epochs. The weight decay is set to $5e-4$, and other settings are the same as those for MNIST. Early stopping is applied to avoid overfitting. 

We apply the same process as MNIST to generate complementary labels. The results in Table \ref{tab_cifar10} verify the effectiveness of our method. ``PC/S'' achieves promising performance when complementary labels are uniformly selected, and our method outperforms ``PC/S'' in other settings.  In the ``uniform'' setting, $P(\bar{Y}|X)$ is not well estimated. As a result, the transition matrix is also poorly estimated. ``LM/E'' thus performs relatively badly.

The results of our method under the ``uniform'' and ``without0'' settings (shown in Table \ref{tab_cifar10}) are usually worse than that of ``with0''. For a certain amount of training images, the empirical results show that in the``uniform'' and ``without0'' setting, the proposed method converges at a slower rate than in the ``with0'' setting. This phenomenon may be caused by the fact that the uncertainty involved with the transition procedure in the ``with0'' setting is less than that in ``uniform'' and ``without0'' settings, making it easier to learn in the former setting. This phenomenon also indicates that, for images in each class, annotators need not to assign all possible complementary labels, but can provide the labels following the criteria, i.e., each label in the label space should be assigned as complementary label for images in at least one class. In this way, we can reduce the number of training examples to achieve high performance. 

\begin{table}[t!]
\centering
\caption{Classification accuracy on CIFAR100 and Tiny ImageNet under the setting ``with0'': the means and standard deviations of classification accuracy over five trials in percentages are reported. ``TL'' denotes the result of learning with true labels. ``LM/T'' and ``LM/E'' refer to our method with the true $\mathbf{Q}$ and the estimated one, respectively.}
\label{tab_cifar100}
\begin{tabular}{l|c|c}
\hline
Method & CIFAR100 & Tiny ImageNet\\ \hline
TL & 69.55 & 63.26 \\
PC/S & $8.95\pm1.47$ & N/A\\
LM/T & $62.84\pm0.30$ & $52.71 \pm 0.71$\\
LM/E & $60.27\pm0.28$ & $49.70 \pm 0.78$\\
\hline
\end{tabular}
\end{table}

\subsection{CIFAR100} 
CIFAR100 also presents a collection of tiny images including 50,000 training images and 10,000 test images. But CIFAR100 has totally 100 classes, each with only 500 training images. Due to the label space being very large and the number of training data being limited, in both ``uniform'' and ``without0'' settings, few training data are assigned as $j$ for images in each class $i$, $\forall i\neq j$. Both the proposed method and ``PC/S'' cannot converge. Here, we only conduct the experiments under the  ``with0'' setting. To generate complementary labels, for each class $y$, we randomly selected 5 labels from $\mathcal{Y} \setminus \{y\}$, and assign them with non-zero probabilities. Others have no chance to be selected.

In these experiments, ResNet-34 is deployed. Other experimental settings are the same with those in CIFAR10. Results are shown in the second column of Table \ref{tab_cifar100}. ``PC/S'' can hardly obtains a good classifier, but our method achieves high accuracies that are comparable to learning with true labels.

\subsection{Tiny ImageNet}
Tiny ImageNet represents 200 classes with 500 images in each class from ImageNet dataset \cite{ILSVRC15}. Images are cropped to $64\times64$. Detailed information is lost during the down-sampling process, making it more difficult to learn. ResNet-18 for ImageNet \cite{he2016deep} is deployed. Instead of using the original first convolutional layer with a $7\times7$ kernel and the subsequent max pooling layer, we replace them with a convolutional layer with a $3\times3$ kernel, stride=1, and no padding. The initial learning rate is 0.1, divided by 10 after 20,000 and 40,000 iterations. The batch size is 256 and weight decay is $5e-4$. Other settings are the same as CIFAR100. The experimental results are shown in the third column of Table \ref{tab_cifar100}. We also only test our method under the setting ``with0''. ``PC/S'' cannot converge here, but our method still achieves promising performance.

\subsection{Discussions}
In this section, we aim to verify the following facts about the proposed method: (1) In practice, our proposed method may not require Q to be invertible in some cases; (2) using randomly generated transition matrices and manually designed transition matrices achieves comparable performance; (3) the convergence rate of the proposed method can benefit more from the biased complementary labels than uniform complementary labels.

\textbf{Non-invertible $\mathbf{Q}$.}
In practice, our proposed method does not require $\mathbf{Q}$ to be invertible. To verify this, we test our method on MNIST dataset under the ``with0'' setting in which the complementary labels are generated according to a non-invertible transition matrix. This transition matrix is randomly generated and shown as follows:
\begin{equation} \label{mnist_true}
\tiny
\begin{bmatrix}
0 & .3042 & 0 & 0 & .5197 & 0 & 0 & .1762 & 0 & 0 \\
0 & 0 & 0 & .6141 & 0 & .0478 & 0 & .3381 & 0 & 0 \\
0 & 0 & 0 & 0 & 0 & .0213 & 0 & .7834 & .1953 & 0 \\
0 & .4489 & 0 & 0 & .1034 & 0 & 0 & .4477 & 0 & 0 \\
0 & .2596 & 0 & 0 & 0 & .3836 & 0 & 0 & .3568 & 0 \\
0 & .4416 & 0 & .5412 & 0 & 0 & 0 & 0 & .0172 & 0 \\
0 & .3049 & 0 & 0 & 0 & .5828 & 0 & .1123 & 0 & 0 \\
0 & .1499 & 0 & 0 & 0 & .3134 & 0 & 0 & 0 & .5367 \\
.1061 & .4718 & 0 & 0 & 0 & 0 & 0 & .4220 & 0 & 0 \\
0 & .4261 & .0579 & 0 & 0 & 0 & .5160 & 0 & 0 & 0
\end{bmatrix},
\end{equation}

 The results are shown in Table \ref{tab_mnist_noninvertible}. Compared with the results with respect to the invertible transion matrix, we can see the performance does not decrease, which indicates our proposed method can be applied to very general settings of complementary labels.

\begin{table} [t!]
\centering
\caption{The classification accuracy on MNIST with respect to the non-invertible transition matrix. ``TL'' denotes the result of learning with true labels. ``LM/T'' and ``LM/E'' refer to our method with the true $\mathbf{Q}$ and the estimated one, respectively. }
\label{tab_mnist_noninvertible}
\begin{tabular}{l|c|c|c}
\hline
Method & TL & LM/T & LM/E \\ \hline
non-invertible & 99.12 & $98.65\pm0.01$ & $98.60 \pm 0.01$ \\
invertible & 99.12 & $98.63\pm0.05$ & $98.61\pm0.05$ \\
\hline
\end{tabular}
\vspace{0pt}
\end{table}

 \textbf{Manually Designed $\mathbf{Q}$.}
In this experiment, rather than randomly selecting complementary labels for each class, we manually determine the transition matrix at first. The manually designed transition matrices in the ``without0'' and ``with0'' setting and their corresponding estimates are shown in Figure \ref{fig_without0} and \ref{fig_with0}, respectively. The classification accuracies are reported in Table \ref{tab_mnist_manualwithout0} and \ref{tab_mnist_manualwith0}, respectively. We can see that the performance with respect to randomly selected and manually designed transition matrices is similiar.

\begin{figure}[t!] 
\centering
\begin{subfigure}
{\includegraphics[width=0.47\columnwidth]{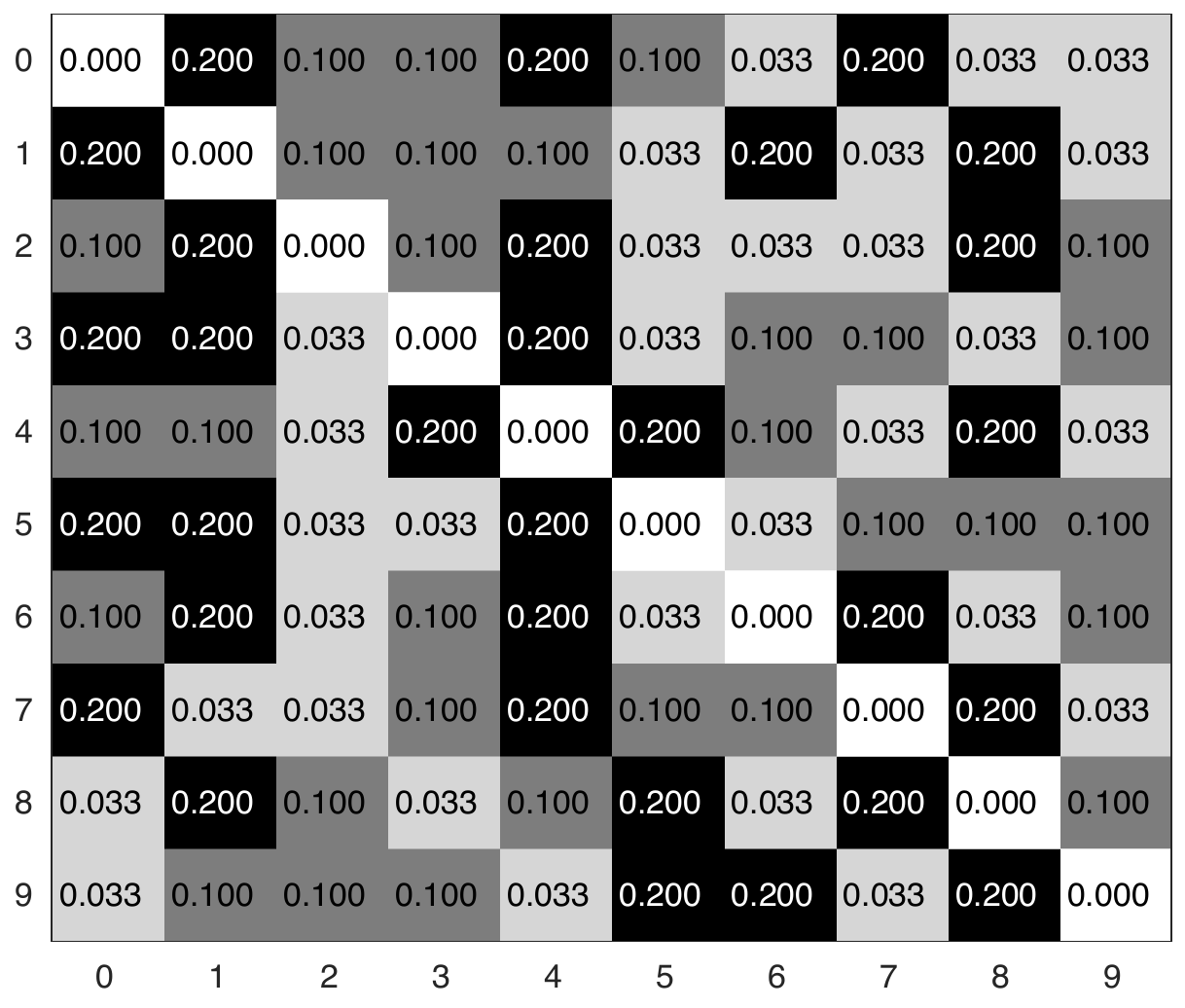}} 
\end{subfigure}
\begin{subfigure}
{\includegraphics[width=0.494\columnwidth]{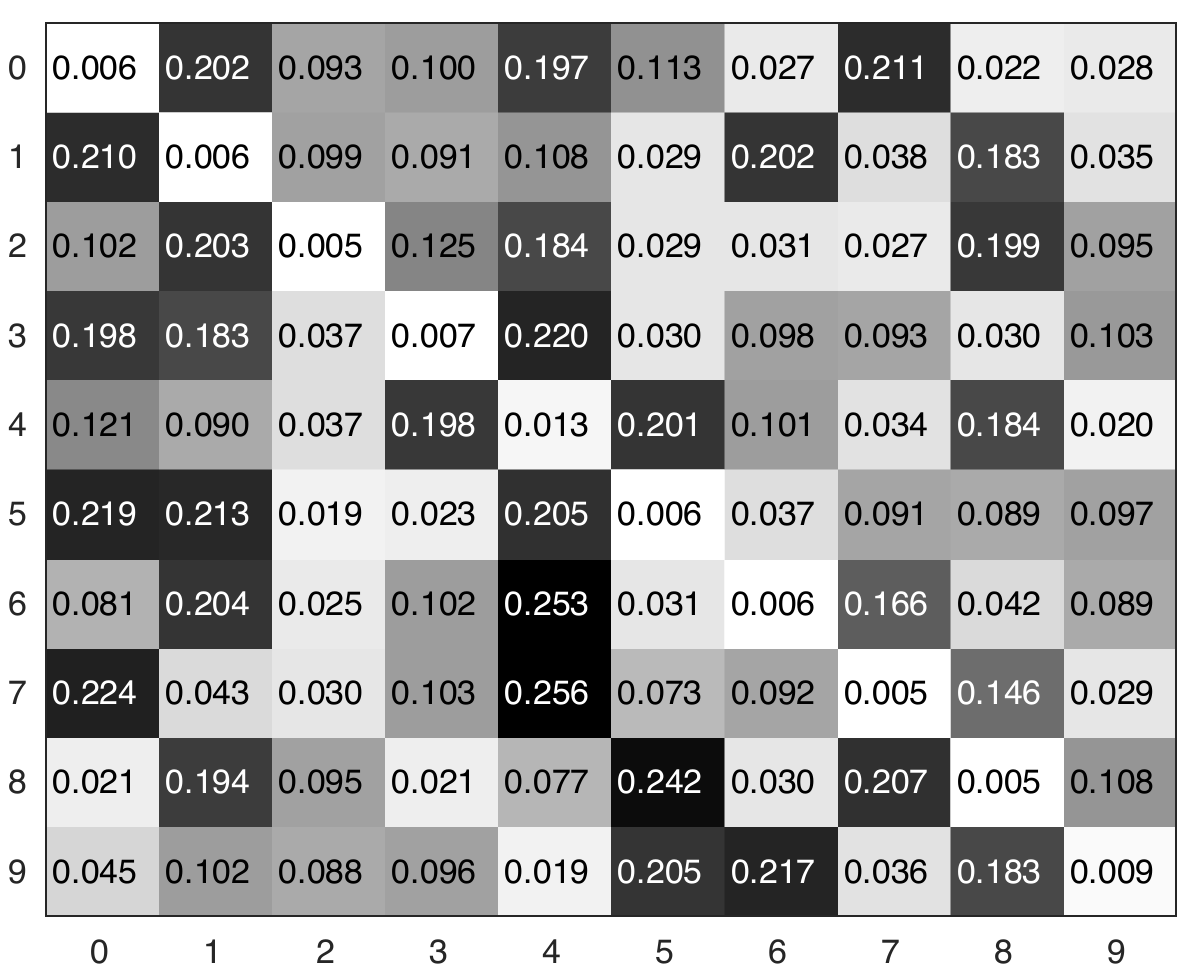}}
\caption{The true transition matrix and its estimate in the ``without0'' setting.}\label{fig_without0}
\end{subfigure}
\end{figure}

\begin{figure}[t!]
\centering
\begin{subfigure}
{\includegraphics[width=0.47\columnwidth]{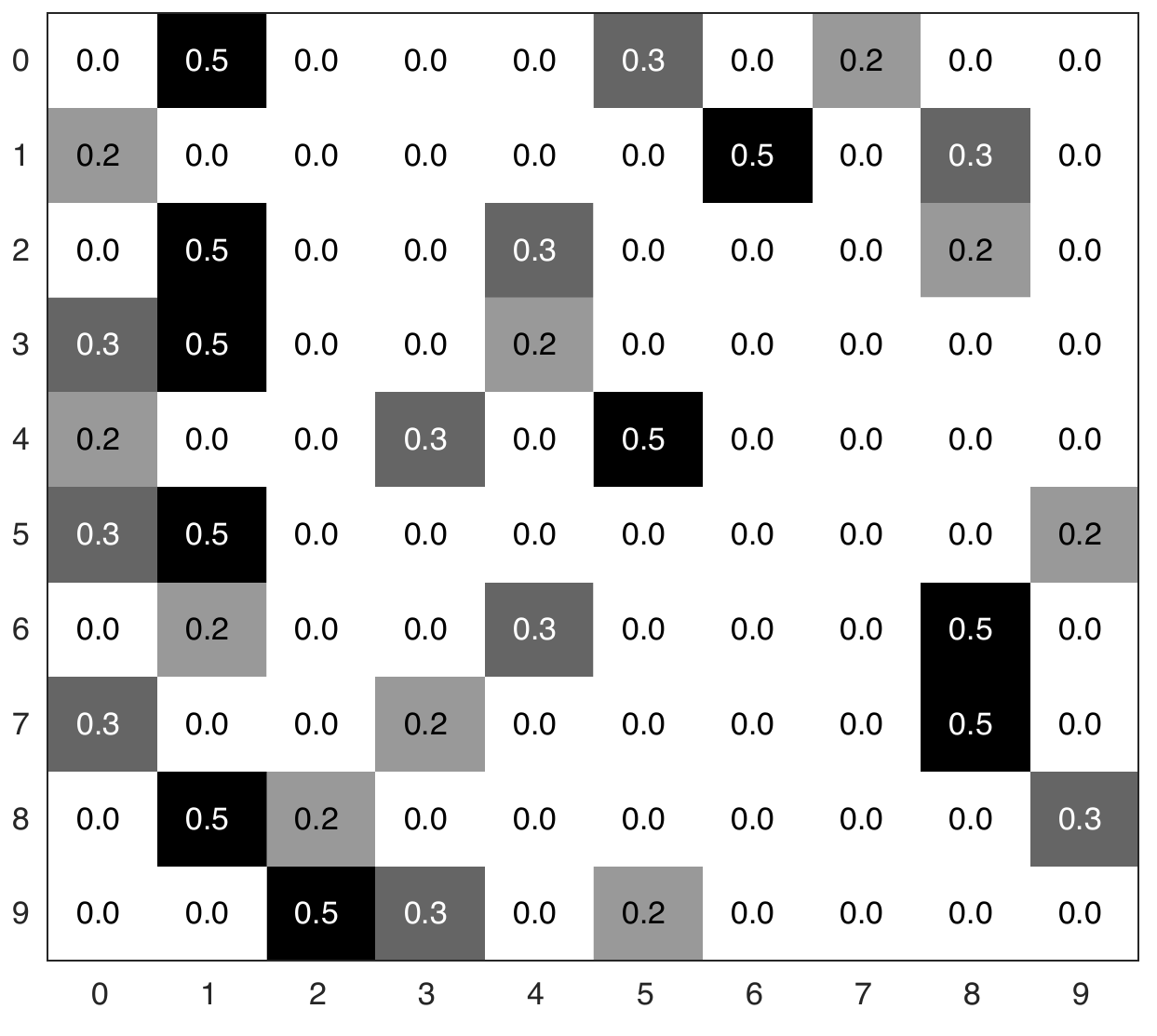}}
\end{subfigure}
\begin{subfigure}
{\includegraphics[width=0.494\columnwidth]{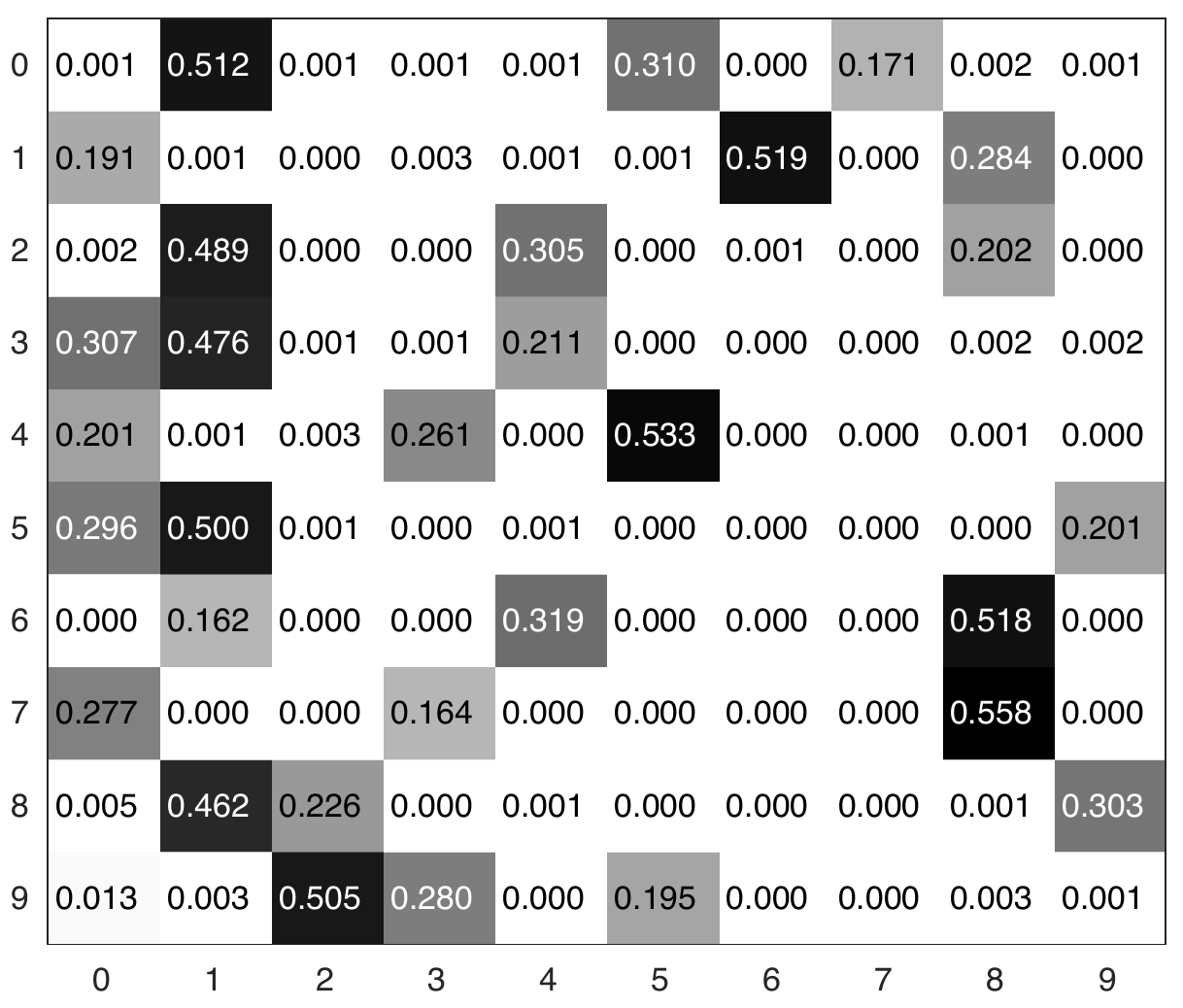}}
\caption{The true transition matrix and its estimate in the ``with0'' setting.}\label{fig_with0}
\end{subfigure}
\end{figure}

\begin{table}[t!]
\centering
\caption{The classification accuracy on MNIST with respect to the manually designed transition matrix in the ``without0'' setting. ``TL'' denotes the result of learning with true labels. ``LM/T'' and ``LM/E'' refer to our method with the true $\mathbf{Q}$ and the estimated one, respectively.}
\label{tab_mnist_manualwithout0}
\begin{tabular}{l|c|c|c}
\hline
Method & TL & LM/T & LM/E \\ \hline
manual & 99.12 & $97.15\pm0.06$ & $96.58 \pm 0.30$ \\
randomness & 99.12 & $97.65\pm0.15$ & $97.04\pm0.31$ \\
\hline
\end{tabular}
\vspace{0pt}
\end{table}

\begin{table}[t!]
\centering
\caption{The classification accuracy on MNIST with respect to the manually designed transition matrix in the ``with0'' setting. ``TL'' denotes the result of learning with true labels. ``LM/T'' and ``LM/E'' refer to our method with the true $\mathbf{Q}$ and the estimated one, respectively.}
\label{tab_mnist_manualwith0}
\begin{tabular}{l|c|c|c}
\hline
Method & TL & LM/T & LM/E \\ \hline
manual & 99.12  & $98.56\pm0.06$ & $98.53 \pm 0.04$ \\
randomness & 99.12 & $98.63\pm0.05$ & $98.61\pm0.05$ \\
\hline
\end{tabular}
\vspace{0pt}
\end{table}

\textbf{The Benefit of Biased Complementary Labels.}
\begin{figure}[t!]
\centering
{\includegraphics[width=0.8\columnwidth]{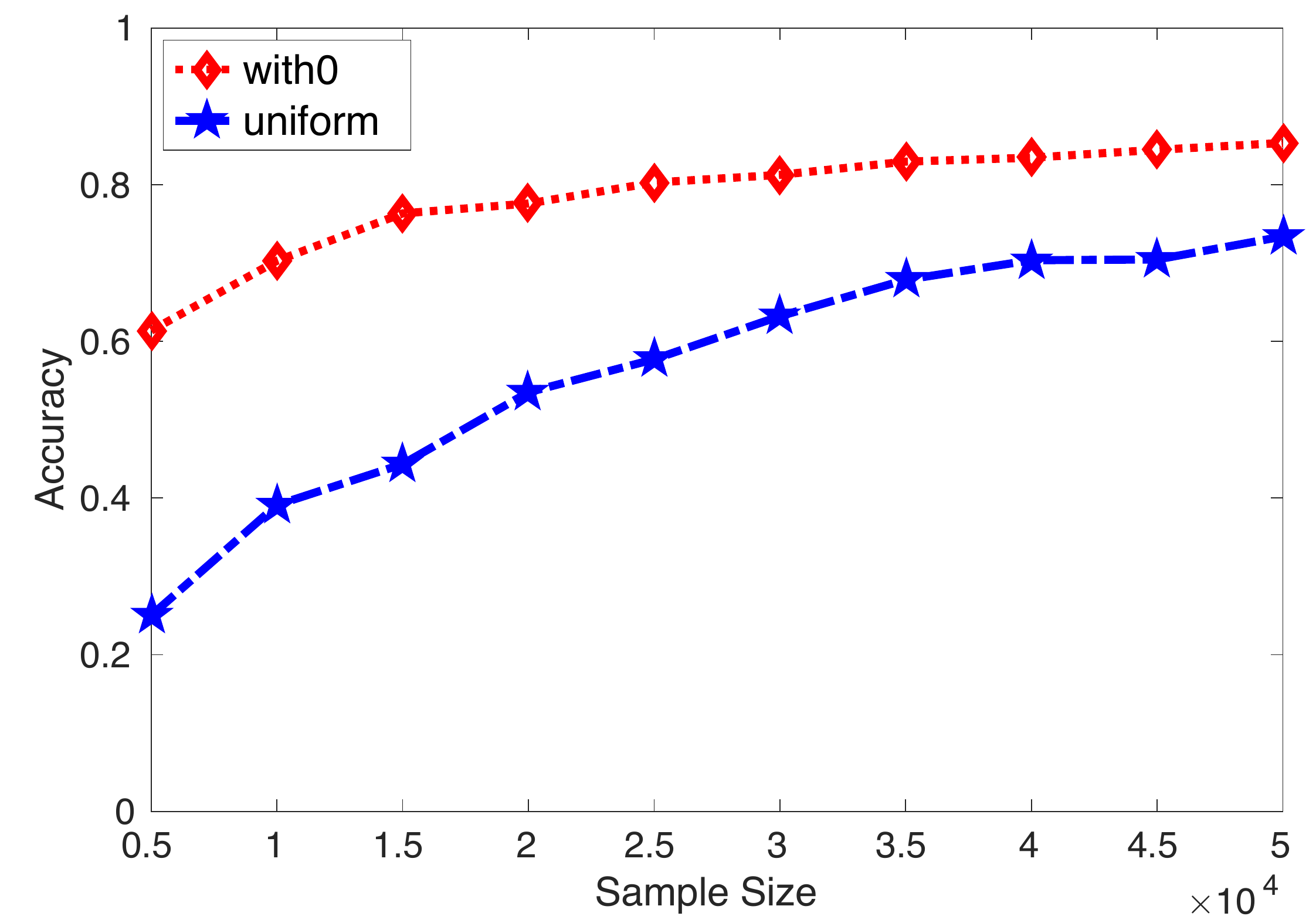}} 
\caption{The convergence analysis of the proposed method under the uniform and biased assumption. } \label{fig_convergence}
\vspace{-10pt}
\end{figure}
Here, we aim to show that the convergence rate of the proposed method can benefit from the biased setting. These experiments are conducted on CIFAR10 dataset. We do not employ the MNIST dataset because it is easy to learn a good classifier from this data and the proposed method can achieve very high performance in both uniform and biased settings. Here, we randomly select a fraction of data from the CIFAR10 dataset, increasing the total sample size of training data from 5,000 to 50,000 with step size 5,000. The classification accuracies with respect to these training data in both ``uniform'' and ``with0'' settings are reported. As shown in Figure \ref{fig_convergence}, the proposed method achieves high performance when sample sizes are small and also converges quickly. Our experimental results provide a good guidance of assigning complementary labels; that is, in practice, for all instances in a certain class, the complementary labels can be selected from a small subset of the label space and the remaining class labels in the label space have no chances to be selected as complementary labels for this class. For example, in the left subfigure of Figure \ref{fig_with0}, for all digits 0, we only assign the labels ``1'', ``5'', and ``7'' as complementary labels. In this way, we can learn an effective classifier with relatively small training dataset. The only requirement for selecting the biased complementary labels is that all class labels in the label space should be assigned as complementary labels. For example, seen in the left subfigure of Figure \ref{fig_with0}, digit ``6'' is only assigned as complementary labels of the digits ``1''. This is OK for learning a good classifier. However, if ``6'' is never assigned as a complementary label. We have no information for this class, which makes it impossible for the algorithm to learn an effective classifier.


\section{Conclusion}
We address the problem of learning with biased complementary labels. Specifically, we consider the setting that the transition probabilities $P(\bar{Y}=j|Y=i), \forall i \neq j$ vary and most of them are zeros. We devise an effective method to estimate the transition matrix given a small amount of data in the anchor set. Based on the transition matrix, we proposed to modify traditional loss functions such that learning with complementary labels can theoretically converge to the optimal classifier learned from examples with true labels. Comprehensive experiments on a wide range of datasets verify that the proposed method is superior to the current state-of-the-art methods. 

\textbf{Acknowledgement.} This work was supported by Australian Research Council Projects FL-170100117, DP-180103424, and LP-150100671. This work was partially supported by SAP SE and research grant from Pfizer titled ``Developing Statistical Method to Jointly Model Genotype and High Dimensional Imaging Endophenotype''. We are also grateful for the computational resources provided by Pittsburgh Super Computing grant number TG-ASC170024.

\newpage
\appendix
\section{Proof of Theorem 1} \label{appA}
\begin{proof}
According to Assumption 1 and based on the modified loss function, when learning from examples with complementary labels, we also have 
\[q_i^*(X) = P(\bar{Y}=i|X), \forall i \in [c].\]

Let $\mathbf{v}(X)=[P(Y=1|X),\cdots,P(Y=c|X)]$ and $\bar{\mathbf{v}}(X)=[P(\bar{Y}=1|X),\cdots,P(\bar{Y}=c|X)]$. According to Eq. (11), we have 
\begin{equation}
\bar{\mathbf{v}}(X) = \mathbf{Q}^\top \mathbf{v}(X),
\end{equation}
which further ensures
\begin{equation}
\mathbf{q}^*(X) = \mathbf{Q}^\top \mathbf{v}(X) = \mathbf{Q}^\top \mathbf{g}^*(X).
\end{equation}

If the transition matrix $\mathbf{Q}$ is invertible, then we find the optimal $\mathbf{g}^*(X)=\mathbf{v}(X)$ which ensures $\bar{f}^* = f^*$. The proof is completed.
\end{proof}

\section{Proof of Lemma 1} \label{appB}
\begin{proof}
According to the definition of the cross-entropy loss, the loss is nonnegative. Then minimizing $R(f)$ can be obtained by minimizing the conditional risk $\mathbb{E}_{P_{Y|\mathbf{X}=\mathbf{x}}}[\ell(f(\mathbf{x}),Y)|\mathbf{X}=\mathbf{x}]$ for every $\mathbf{x} \in \mathcal{X}$ \cite{masnadi2009design}, and the conditional risk is defined as
\begin{equation} \label{cond_risk_true}
\psi(\mathbf{g}) = -\sum_{i=1}^c P(Y=i|\mathbf{x})\log(g_i(\mathbf{x})),
\end{equation}
where $P(Y=i|\mathbf{x})$ is short for $P(Y=i|\mathbf{X}=\mathbf{x})$.

The problem turns to minimizing $\psi(\mathbf{g})$ subject to $\mathbf{g}(\mathbf{x})\in\Delta^{c-1}$, $\forall \mathbf{x} \in \mathcal{X}$. Then by using the Lagrange Multiplier method \cite{bertsekas1999nonlinear}, we have
\[\mathcal{L}=\psi(\mathbf{g}) -\lambda(\sum_{i=1}^c g_i(\mathbf{x})-1).\]

The derivative of $\mathcal{L}$ with respect to $\mathbf{g}$ is,
\[\frac{\partial \mathcal{L}}{\partial \mathbf{g}} = [-\frac{P(Y=1|\mathbf{x})}{g_1(\mathbf{x})}-\lambda, \cdots, -\frac{P(Y=c|\mathbf{x})}{g_c(\mathbf{x})}-\lambda]^\top,\]
which is equal to $\mathbf{0}$ when substituting $\mathbf{g}$ with the minimizer $\mathbf{g}^*$, i.e., $\frac{\partial \mathcal{L}}{\partial \mathbf{g}^*} = \mathbf{0}$. Then
\[g^*_i(\mathbf{x}) = -\lambda P(Y=i|\mathbf{x}), \forall i\in [c] \textrm{ and } \forall \mathbf{x}\in \mathcal{X}.\]

Because $\sum_{i}^c g^*_i(\mathbf{x})=1$ and $\sum_{i=1}^c P(Y=i|\mathbf{x})=1$, then we have
\[\sum_{i=1}^c g^*_i(\mathbf{x}) = -\lambda \sum_{i=1}^c P(Y=i|\mathbf{x}) = 1.\]

Thus we easily get $\lambda=-1$ and $g_i^*(\mathbf{x})=P(Y=i|\mathbf{x}), \forall i \in [c]$ and $\forall \mathbf{x} \in \mathcal{X}$. The proof is completed.
\end{proof}

\section{Proof of Lemma 2} \label{appC}
In order to prove Lemma 2, we need the loss function $\bar{\ell}(f(X),\bar{Y}=i)$ to be Lipschitz continous with respect to $h_i(X)$, which can be proved by the following lemma,
\begin{mylemma} \label{lipschtiz}
Suppose that all column of the transition matrix is not all-zero, then loss function $\bar{\ell}(f(X),\bar{Y}=i)$ is 1-Lipschitz with respect to $h_j(X), \forall j \in [c]$.
\end{mylemma}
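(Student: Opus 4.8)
The plan is to deduce the $1$-Lipschitz property from a uniform bound on the partial derivatives of $\bar{\ell}(f(X),\bar{Y}=i)$ with respect to each $h_j(X)$. Since $\bar{\ell}$ is smooth in $h_j$ on the relevant domain, once I show $\left|\partial \bar{\ell}/\partial h_j\right|\leq 1$ everywhere, the fundamental theorem of calculus (mean value theorem) immediately gives $|\bar{\ell}(\dots,h_j)-\bar{\ell}(\dots,h_j')|\leq |h_j-h_j'|$, which is the assertion. So the entire task reduces to differentiating and bounding.

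First I would introduce the shorthand $A = \sum_{k=1}^c Q_{ki}\exp(h_k(X))$ and $B = \sum_{k=1}^c \exp(h_k(X))$, so that $\bar{\ell}(f(X),\bar{Y}=i) = \log B - \log A$. This is exactly where the hypothesis that no column of $\mathbf{Q}$ is all-zero is needed: it guarantees that some $Q_{ki}>0$, hence $A>0$, so the logarithm and its derivatives are well-defined and finite. A direct differentiation then yields
\begin{equation}
\frac{\partial \bar{\ell}}{\partial h_j(X)} = \frac{\exp(h_j(X))}{B} - \frac{Q_{ji}\exp(h_j(X))}{A}.
\end{equation}

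Next I would bound each of the two terms separately. The first term equals the softmax output $g_j(X)=\exp(h_j(X))/B$, which lies in $[0,1]$. For the second term, nonnegativity of every summand of $A$ gives $A\geq Q_{ji}\exp(h_j(X))\geq 0$, and together with $Q_{ji}\in[0,1]$ this forces the ratio into $[0,1]$ as well. Thus the partial derivative is a difference of two quantities each in $[0,1]$, so it lies in $[-1,1]$, giving $\left|\partial \bar{\ell}/\partial h_j(X)\right|\leq 1$ for every $j\in[c]$, which completes the argument.

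I do not expect a genuine obstacle here: the computation is routine and the bounds are elementary. The only point requiring care is the well-definedness of the loss and its gradient, since a column of $\mathbf{Q}$ that is identically zero would make $A$ vanish and send the loss to $+\infty$; this is precisely the content of the stated hypothesis, and I would flag explicitly that it is used to secure $A>0$ before differentiating.
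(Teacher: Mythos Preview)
Your proposal is correct and follows essentially the same approach as the paper: compute the partial derivative of $\bar{\ell}$ with respect to $h_j(X)$, observe that it is the difference of two nonnegative ratios each bounded by $1$, and conclude the derivative lies in $[-1,1]$. Your write-up is in fact slightly more explicit than the paper's, since you spell out why each ratio is in $[0,1]$ and identify precisely where the no-all-zero-column hypothesis is used to ensure $A>0$.
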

\begin{proof}
Recall that 
\begin{equation}
\bar{\ell}(f(X),\bar{Y}=i) = -\log\left(\frac{\sum_{k=1}^c Q_{ki}\exp(h_k(X))}{\sum_{k=1}^c \exp(h_k(X))}\right).
\end{equation}
Take the derivative of $\bar{\ell}(f(X),\bar{Y}=i)$ with respect to $h_j(X)$, we have 
\begin{equation} \label{derivative}
\begin{aligned}
&\frac{\partial \bar{\ell}(f(X),\bar{Y}=i)}{\partial h_j(X)} \\
&= -\frac{Q_{ji}\exp(h_j(X))}{\sum_{k=1}^c Q_{ki}\exp(h_k(X))} + \frac{\exp(h_j(X))}{\sum_{k=1}^c \exp(h_k(X))}.
\end{aligned}
\end{equation}

According to Eq.(\ref{derivative}), it is easy to conclude that $-1 \leq \frac{\partial \bar{\ell}(f(X),\bar{Y}=i)}{\partial h_j(X)} \leq 1$, which also indicates that the loss function is 1-Lipschitz with respect to $h_j(X), \forall j \in [c]$. The proof is completed.
\end{proof}

Now we are ready to prove Lemma 2.
\begin{proof}
Since the softmax function preserve the rank of its inputs, $f(X)=\arg\max_{i \in [c]} g_i(X) = \arg\max_{i \in [c]} h_i(X)$. We thus have
\begin{equation}
\begin{aligned}
&\mathfrak{R}_{n_{i}}(\bar{\ell}\circ\mathcal{F})\\
&=\mathbb{E}\left[\sup_{f\in \mathcal{F}}\frac{1}{n_i}\sum_{j=1}^{n_i}\sigma_j\bar{\ell}(f(X_j),\bar{Y}_j=i)\right] \\
&= \mathbb{E}\left[\sup_{\arg\max\{h_1(X),\cdots,h_c(X)\} }\frac{1}{n_i}\sum_{j=1}^{n_i}\sigma_j\bar{\ell}(f(X_j),\bar{Y}_j=i)\right] \\
&= \mathbb{E}\left[\sup_{\max\{h_1(X),\cdots,h_c(X)\} }\frac{1}{n_i}\sum_{j=1}^{n_i}\sigma_j\bar{\ell}(f(X_j),\bar{Y}_j=i)\right] \\
&\leq \mathbb{E}\left[ \sum_{k=1}^c \sup_{h_k(X) }\frac{1}{n_i}\sum_{j=1}^{n_i}\sigma_j\bar{\ell}(f(X_j),\bar{Y}_j=i)\right] \\
&= \mathbb{E}\left[ \sum_{k=1}^c \sup_{h_k(X) }\frac{1}{n_i}\sum_{j=1}^{n_i}\sigma_j\log\left(\frac{\sum_{m=1}^c Q_{mi} \exp(h_m(X))}{\sum_{m=1}^c \exp(h_m(X))} \right)\right] \\ 
&= \sum_{k=1}^c \mathbb{E}\left[ \sup_{h_k(X) }\frac{1}{n_i}\sum_{j=1}^{n_i}\sigma_j\log\left(\frac{\sum_{m=1}^c Q_{mi} \exp(h_m(X))}{\sum_{m=1}^c \exp(h_m(X))} \right)\right].
\end{aligned}
\end{equation}
Here, the argument $f\in\mathcal{F}$ of $\sup$ function indicates that $f$ is chosen from the function space $\mathcal{F}$. The function space $\mathcal{F}$ is actually determined by the function space of $\mathbf{h}$ due to the fact that $f = \arg\max\{g_1(X),\cdots,g_c(X)\}=\arg\max \{h_1(X),\cdots,h_c(X)\}$. Thus, the argument of $\sup$ function can be changed to $\arg\max{h_1(X),\cdots,h_c(X)}$ in the second equality. Since $\arg\max \{h_1(X),\cdots,h_c(X)\}$ and $\max\{h_1(X),\cdots,h_c(X)\}$ give the same constraint on $h_i(X), \forall i \in[c]$, the argument is changed to $\max\{h_1(X),\cdots,h_c(X)\}$ in the third equality.

According to Lemma \ref{lipschtiz} and Talagrand’s contraction lemma \cite{ledoux2013probability}, we have
\begin{equation}
\begin{aligned}
\mathfrak{R}_{n_{i}}(\bar{\ell}\circ\mathcal{F}) &\leq \sum_{k=1}^c \mathbb{E}\left[ \sup_{h_k(X) }\frac{1}{n_i}\sum_{j=1}^{n_i}\sigma_j h_k(X)\right]\\
&= \sum_{k=1}^c \mathfrak{R}_{n_{i}}(\mathcal{H}) \\
&= c\mathfrak{R}_{n_{i}}(\mathcal{H}),
\end{aligned}
\end{equation}
The proof is completed.
\end{proof}

\newpage
\bibliography{egbib}

\begin{thebibliography}{10}

\bibitem{bartlett2006convexity}
Peter~L Bartlett, Michael~I Jordan, and Jon~D McAuliffe.
\newblock Convexity, classification, and risk bounds.
\newblock {\em Journal of the American Statistical Association},
  101(473):138--156, 2006.

\bibitem{bartlett2002rademacher}
Peter~L Bartlett and Shahar Mendelson.
\newblock Rademacher and gaussian complexities: Risk bounds and structural
  results.
\newblock {\em Journal of Machine Learning Research}, 3(Nov):463--482, 2002.

\bibitem{bertsekas1999nonlinear}
Dimitri~P Bertsekas.
\newblock {\em Nonlinear programming}.
\newblock Athena scientific Belmont, 1999.

\bibitem{boucheron2013concentration}
St{\'e}phane Boucheron, G{\'a}bor Lugosi, and Pascal Massart.
\newblock {\em Concentration inequalities: A nonasymptotic theory of
  independence}.
\newblock Oxford university press, 2013.

\bibitem{cheng2017learning}
Jiacheng Cheng, Tongliang Liu, Kotagiri Ramamohanarao, and Dacheng Tao.
\newblock Learning with bounded instance-and label-dependent label noise.
\newblock {\em arXiv preprint arXiv:1709.03768}, 2017.

\bibitem{cour2011learning}
Timothee Cour, Ben Sapp, and Ben Taskar.
\newblock Learning from partial labels.
\newblock {\em Journal of Machine Learning Research}, 12(May):1501--1536, 2011.

\bibitem{du2014analysis}
Marthinus~C du~Plessis, Gang Niu, and Masashi Sugiyama.
\newblock Analysis of learning from positive and unlabeled data.
\newblock In {\em NIPS}, pages 703--711, 2014.

\bibitem{Abbasnejad_2017_CVPR}
M.~Ehsan~Abbasnejad, Anthony Dick, and Anton van~den Hengel.
\newblock Infinite variational autoencoder for semi-supervised learning.
\newblock In {\em CVPR}, July 2017.

\bibitem{gagniuc2017markov}
Paul~A Gagniuc.
\newblock {\em Markov Chains: From Theory to Implementation and
  Experimentation}.
\newblock John Wiley \& Sons, 2017.

\bibitem{gong2017learning}
Chen Gong, Hengmin Zhang, Jian Yang, and Dacheng Tao.
\newblock Learning with inadequate and incorrect supervision.
\newblock In {\em ICDM}, pages 889--894. IEEE, 2017.

\bibitem{haeusser2017learning}
Philip Haeusser, Alexander Mordvintsev, and Daniel Cremers.
\newblock Learning by association: {A} versatile semi-supervised training
  method for neural networks.
\newblock In {\em CVPR}, 2017.

\bibitem{han2018progressive}
Bo~Han, Ivor~W Tsang, Ling Chen, P~Yu Celina, and Sai-Fu Fung.
\newblock Progressive stochastic learning for noisy labels.
\newblock {\em IEEE Transactions on Neural Networks and Learning Systems},
  (99):1--13, 2018.

\bibitem{han2018co}
Bo~Han, Quanming Yao, Xingrui Yu, Gang Niu, Miao Xu, Weihua Hu, Ivor Tsang, and
  Masashi Sugiyama.
\newblock Co-teaching: Robust training deep neural networks with extremely
  noisy labels.
\newblock {\em arXiv preprint arXiv:1804.06872}, 2018.

\bibitem{he2016deep}
Kaiming He, Xiangyu Zhang, Shaoqing Ren, and Jian Sun.
\newblock Deep residual learning for image recognition.
\newblock In {\em CVPR}, pages 770--778, 2016.

\bibitem{ishida2017learning}
Takashi Ishida, Gang Niu, and Masashi Sugiyama.
\newblock Learning from complementary labels.
\newblock In {\em NIPS}, 2017.

\bibitem{krizhevsky2009learning}
Alex Krizhevsky and Geoffrey Hinton.
\newblock Learning multiple layers of features from tiny images.
\newblock 2009.

\bibitem{Law_2017_CVPR}
Marc~T. Law, Yaoliang Yu, Raquel Urtasun, Richard~S. Zemel, and Eric~P. Xing.
\newblock Efficient multiple instance metric learning using weakly supervised
  data.
\newblock In {\em CVPR}, July 2017.

\bibitem{mnistdigits}
Yan LeCun, Cortes Corinna, and Burges~J.C. Christopher.
\newblock The {MNIST} database of handwritten digits.
\newblock \url{http://yann.lecun.com/exdb/mnist/}.

\bibitem{lecun1998gradient}
Yann LeCun, L{\'e}on Bottou, Yoshua Bengio, and Patrick Haffner.
\newblock Gradient-based learning applied to document recognition.
\newblock {\em Proceedings of the IEEE}, 86(11):2278--2324, 1998.

\bibitem{ledoux2013probability}
Michel Ledoux and Michel Talagrand.
\newblock {\em Probability in Banach Spaces: {I}soperimetry and processes}.
\newblock Springer Science \& Business Media, 2013.

\bibitem{liu2016classification}
Tongliang Liu and Dacheng Tao.
\newblock Classification with noisy labels by importance reweighting.
\newblock {\em IEEE Transactions on Pattern Analysis and Machine Intelligence},
  38(3):447--461, March 2016.

\bibitem{masnadi2009design}
Hamed Masnadi-Shirazi and Nuno Vasconcelos.
\newblock On the design of loss functions for classification: {T}heory,
  robustness to outliers, and {S}avage{B}oost.
\newblock In {\em NIPS}, pages 1049--1056, 2009.

\bibitem{misra2016seeing}
Ishan Misra, C~Lawrence~Zitnick, Margaret Mitchell, and Ross Girshick.
\newblock Seeing through the {H}uman reporting bias: {V}isual classifiers from
  noisy {H}uman-centric labels.
\newblock In {\em CVPR}, pages 2930--2939, 2016.

\bibitem{mohri2012foundations}
Mehryar Mohri, Afshin Rostamizadeh, and Ameet Talwalkar.
\newblock {\em Foundations of machine learning}.
\newblock MIT press, 2012.

\bibitem{natarajan2013learning}
Nagarajan Natarajan, Inderjit~S Dhillon, Pradeep~K Ravikumar, and Ambuj Tewari.
\newblock Learning with noisy labels.
\newblock In {\em NIPS}, pages 1196--1204, 2013.

\bibitem{patrini2016making}
Giorgio Patrini, Alessandro Rozza, Aditya Menon, Richard Nock, and Lizhen Qu.
\newblock Making neural networks robust to label noise: {A} loss correction
  approach.
\newblock In {\em CVPR}, 2017.

\bibitem{read2011classifier}
Jesse Read, Bernhard Pfahringer, Geoff Holmes, and Eibe Frank.
\newblock Classifier chains for multi-label classification.
\newblock {\em Machine learning}, 85(3):333--359, 2011.

\bibitem{ILSVRC15}
Olga Russakovsky, Jia Deng, Hao Su, Jonathan Krause, Sanjeev Satheesh, Sean Ma,
  Zhiheng Huang, Andrej Karpathy, Aditya Khosla, Michael Bernstein,
  Alexander~C. Berg, and Li~Fei-Fei.
\newblock {ImageNet Large Scale Visual Recognition Challenge}.
\newblock {\em International Journal of Computer Vision}, 115(3):211--252,
  2015.

\bibitem{sukhbaatar2014training}
Sainbayar Sukhbaatar, Joan Bruna, Manohar Paluri, Lubomir Bourdev, and Rob
  Fergus.
\newblock Training convolutional networks with noisy labels.
\newblock {\em arXiv preprint arXiv:1406.2080}, 2014.

\bibitem{sutskever2013importance}
Ilya Sutskever, James Martens, George Dahl, and Geoffrey Hinton.
\newblock On the importance of initialization and momentum in deep learning.
\newblock In {\em ICML}, pages 1139--1147, 2013.

\bibitem{vahdat2017toward}
Arash Vahdat.
\newblock Toward robustness against label noise in training deep discriminative
  neural networks.
\newblock In {\em NIPS}, pages 5596--5605, 2017.

\bibitem{vapnik2009new}
Vladimir Vapnik and Akshay Vashist.
\newblock A new learning paradigm: Learning using privileged information.
\newblock {\em Neural networks}, 22(5):544--557, 2009.

\bibitem{Veit_2017_CVPR}
Andreas Veit, Neil Alldrin, Gal Chechik, Ivan Krasin, Abhinav Gupta, and Serge
  Belongie.
\newblock Learning from noisy large-scale datasets with minimal supervision.
\newblock In {\em CVPR}, July 2017.

\bibitem{wan2013regularization}
Li~Wan, Matthew Zeiler, Sixin Zhang, Yann Le~Cun, and Rob Fergus.
\newblock Regularization of neural networks using dropconnect.
\newblock In {\em ICML}, pages 1058--1066, 2013.

\bibitem{wang2018multiclass}
Ruxin Wang, Tongliang Liu, and Dacheng Tao.
\newblock Multiclass learning with partially corrupted labels.
\newblock {\em IEEE Transactions on Neural Networks and Learning Systems},
  29(6):2568--2580, 2018.

\bibitem{xing2003distance}
Eric~P Xing, Michael~I Jordan, Stuart~J Russell, and Andrew~Y Ng.
\newblock Distance metric learning with application to clustering with
  side-information.
\newblock In {\em NIPS}, pages 521--528, 2003.

\bibitem{yu2018efficient}
Xiyu Yu, Tongliang Liu, Mingming Gong, Kayhan Batmanghelich, and Dacheng Tao.
\newblock An efficient and provable approach for mixture proportion estimation
  using linear independence assumption.
\newblock In {\em CVPR}, pages 4480--4489, 2018.

\bibitem{yu2017transfer}
Xiyu Yu, Tongliang Liu, Mingming Gong, Kun Zhang, and Dacheng Tao.
\newblock Transfer learning with label noise.
\newblock {\em arXiv preprint arXiv:1707.09724}, 2017.

\bibitem{zhang2004statistical}
Tong Zhang.
\newblock Statistical analysis of some multi-category large margin
  classification methods.
\newblock {\em Journal of Machine Learning Research}, 5(Oct):1225--1251, 2004.

\bibitem{zhu2005semi}
Xiaojin Zhu.
\newblock Semi-supervised learning literature survey.
\newblock 2005.

\end{thebibliography}
\end{document}